\documentclass{article}
\usepackage{iclr2027_conference,times}

\usepackage[utf8]{inputenc}
\usepackage[T1]{fontenc}
\usepackage{url}
\usepackage{booktabs}
\usepackage{amsfonts}
\usepackage{amsmath}
\usepackage{amssymb}
\usepackage{amsthm}
\usepackage{dsfont}
\usepackage{nicefrac}
\usepackage{microtype}
\usepackage{graphicx}
\usepackage{bbm}
\usepackage{multirow}
\usepackage{siunitx}
\usepackage[version=4]{mhchem}
\usepackage{hyperref}
\usepackage{tabularx}
\usepackage[table]{xcolor}
\usepackage{wrapfig}
\usepackage{capt-of}
\usepackage{tikz}
\usepackage{float}
\usetikzlibrary{arrows.meta,calc,positioning}
\usepackage{pifont}
\newcommand{\cmark}{\ding{51}}
\newcommand{\xmark}{\ding{55}}
\newcommand{\ind}{\mathbbm{1}}
\theoremstyle{plain}
\newtheorem{proposition}{Proposition}
\definecolor{accent}{RGB}{30,70,120}

\newcommand\msmall[1]{\mbox{\scriptsize\ensuremath{#1}}}

\title{OxiGen: Oxidation-State-Aware Crystal \\ Generation}

\author{%
Dylan John$^{1}$, Kim E. Jelfs$^{1}$, Alex M. Ganose$^{1}$, Eleonora Giunchiglia$^{2}$ \\
$^{1}$Department of Chemistry, Imperial College London \\
$^{2}$Department of Electrical and Electronic Engineering, Imperial College London \\
\texttt{d.john25@imperial.ac.uk}
}

\iclrfinalcopy

\begin{document}
\maketitle

\begin{abstract}

Generative models have the potential to accelerate inorganic materials discovery by enabling inverse design, but generating experimentally realisable crystals remains challenging. Oxidation states are widely used to assess the compositional validity of crystals and guide inorganic materials discovery. While existing generative models for crystals can generate materials with charge-neutral oxidation-state assignments, they poorly reproduce the distributions of oxidation states observed in synthesised materials. To address this limitation, we propose OxiGen, an oxidation-state-aware crystal diffusion model that explicitly represents oxidation states during generation. OxiGen enforces global charge neutrality by construction using a structured output layer with exact inference over a finite-state automaton. Empirically, OxiGen substantially improves oxidation-state fidelity, generates the highest rate of stable, unique, and novel crystals among evaluated methods, and maintains high compositional validity even under property conditioning.

\end{abstract}

\section{Introduction}
\label{sec:intro}

The discovery of new inorganic materials is central to the development of technologies including solid-state batteries, photovoltaic cells and electrocatalysis \citep{yao_machine_2023}. Generative models have the potential to accelerate materials discovery by enabling \emph{inverse design}, in which candidate crystal structures with desired properties are generated directly rather than selected by screening existing databases \citep{park_has_2024}. Recent approaches evaluate the stability of generated crystals using their predicted energy above the convex hull \citep{zeni_generative_2025,miller_flowmm_2024,veljkovic_crystalite_2026}, but a low predicted energy above hull does not guarantee that a crystal is experimentally realisable \citep{park_closing_2026}. Chemical heuristics derived from theory and empirical observation therefore remain important for guiding materials discovery.

\emph{Oxidation states} are charges assigned to atoms through simple electron-counting rules \citep{karen_oxidation_2015}, and are a key heuristic for understanding the chemical bonding and properties of inorganic crystals \citep{walsh_oxidation_2018}. The oxidation states assigned to a crystal must satisfy \emph{charge neutrality}, meaning that they sum to zero. Each element can adopt a specific set of oxidation states, determined by its electronic configuration and orbital energies, and the observed frequencies of these oxidation states in synthesised materials vary widely. \emph{Compositional validity} evaluates whether the chemical composition of a crystal has at least one valid charge-neutral oxidation-state assignment \citep{davies_smact_2019}, but does not account for the observed frequencies of the required oxidation states. Rare oxidation states often occur only in particular bonding environments, under specific external conditions, or through specialised synthesis routes \citep{mastej_chemical_2026}. Consequently, crystals with uncommon oxidation-state assignments may be less likely to be experimentally realisable.

Recent generative models for crystals achieve high compositional validity, but many generated crystals can only be charge balanced using rare oxidation states \citep{mastej_chemical_2026}. Filtering based on predicted energy above hull does not necessarily address this issue: analysis of crystals generated by GNoME \citep{merchant_scaling_2023} identified structures predicted to lie on the hull that nevertheless exhibit improbable oxidation-state assignments \citep{cheetham_artificial_2024}. This highlights a limitation of current stability metrics, which may not capture whether the required oxidation states are chemically plausible. We therefore argue that effective generative models for crystals should not only generate stable and compositionally valid crystals, but also reproduce the distributions of oxidation states observed in synthesised materials.

In this work, we propose a novel approach to oxidation-state-aware crystal generation that explicitly represents oxidation states during generation, allowing the model to learn oxidation-state frequencies directly from the training data. However, explicitly generating oxidation states requires that the predicted oxidation states for all atoms jointly satisfy charge neutrality. To address this, we introduce a \emph{structured output layer} that exploits the simple additive structure of the charge-neutrality constraint for tractable structured inference. We then integrate this layer into a crystal diffusion model to enforce charge neutrality during sampling. Our main contributions are:
\begin{itemize}
\item We propose a structured output layer that uses dynamic programming over a finite-state automaton for exact charge-neutral inference.
\item We introduce OxiGen, an oxidation-state-aware crystal diffusion model that explicitly represents oxidation states and enforces charge neutrality during generation. 
\item We show empirically that OxiGen substantially improves oxidation-state fidelity over existing crystal generative models, generates the highest rate of stable, unique, and novel crystals among evaluated methods, and maintains high compositional validity even under property conditioning.
\end{itemize}

\section{Related Work}
\label{sec:related}

\noindent\textbf{Crystal Generative Models.}\quad Recent work has explored a wide range of generative methods for inorganic crystals, including diffusion models \citep{xie_crystal_2022, jiao_crystal_2023, zeni_generative_2025, joshi_all-atom_2025, veljkovic_crystalite_2026,  park_guiding_2026,yi_crystaldit_2026}, conditional flow matching \citep{miller_flowmm_2024, luo_crystalflow_2025}, and stochastic interpolants \citep{hollmer_open_2025}. Several approaches incorporate crystallographic constraints directly into the representation or generative process, particularly space-group symmetry \citep{jiao_space_2024, levy_symmcd_2024, chang_space_2025}. Other approaches address compositional validity. Crystal-GFN enforces charge neutrality as a hard constraint within a GFlowNet \citep{ai4science_crystal-gfn_2023}, but only generates composition, space group and lattice parameters, not atomic coordinates. \citet{mastej_chemical_2026} use an oxidation-state-based chemical validity operator as a reinforcement-learning reward, but do not include oxidation states within the generative model. CrysVCD proposes a chemical-formula transformer that represents compositions as sequences of element--oxidation-state pairs, but requires a separate diffusion model to generate the corresponding crystal structure \citep{cheng_enhancing_2026}. ChargeDIFF instead jointly generates the crystal structure and its continuous charge density, but focuses on modelling electronic structure rather than addressing compositional validity \citep{park_generative_2026}.  In contrast to these approaches, OxiGen incorporates oxidation states directly within a crystal diffusion model, jointly generating composition and structure while enforcing charge neutrality by construction.

\noindent\textbf{Structured Prediction.}\quad Structured prediction layers are a common approach for enforcing hard constraints on discrete neural network predictions \citep{dragone_neuro-symbolic_2021,hoernle_multiplexnet_2022,ahmed_semantic_2022}. In particular, Semantic Probabilistic Layers \citep{ahmed_semantic_2022} construct a structured distribution by restricting the support of a factorised predictive distribution to assignments satisfying a symbolic constraint. More recently, constraints have been incorporated into the sampling process of discrete diffusion models \citep{cardei_constrained_2025,christopher_neuro-symbolic_2025,suresh_dingo_2025,tomasi_primal-dual_2026}. DINGO \citep{suresh_dingo_2025} represents regular-language constraints with finite automata and uses dynamic programming for exact constrained maximum a posteriori (MAP) decoding in diffusion language models. In this work, we represent charge neutrality as a finite-state automaton and show that the resulting dynamic program supports both exact MAP decoding and exact constrained sampling over oxidation-state assignments. Concurrent work by \citet{dang_constrained_2026} independently develops exact constrained sampling over finite automata for diffusion language models.

\section{Preliminaries}
\label{sec:prelim}

\noindent\textbf{Crystal representation.}\quad A crystal is a periodic arrangement of atoms, represented by a repeating unit cell whose translations tile 3D space. We refer to each indexed atom in the unit cell as an \emph{atomic site}. A unit cell containing $N$ atomic sites can be represented by the triple $\boldsymbol{M}=(\boldsymbol{A},\boldsymbol{F},\boldsymbol{L})$. Here, $\boldsymbol{A}=(a_1,a_2,\ldots,a_N)\in\mathcal{A}^N$ gives the element identity of each site, where $\mathcal{A}$ is the set of chemical elements; $\boldsymbol{F}=[\boldsymbol{f}_1,\boldsymbol{f}_2,\ldots,\boldsymbol{f}_N]\in[0,1)^{3\times N}$ gives the fractional coordinates of each site within the unit cell; and $\boldsymbol{L}=[\boldsymbol{\ell}_1,\boldsymbol{\ell}_2,\boldsymbol{\ell}_3]\in\mathbb{R}^{3\times 3}$ is called the \emph{lattice matrix}, whose columns are the three basis vectors defining the unit cell. The corresponding Cartesian coordinates are $\boldsymbol{X}=\boldsymbol{L}\boldsymbol{F}$.

\noindent\textbf{Oxidation states.}\quad Formal oxidation states are integer-valued charges associated with the atoms in a crystal. For each element $a\in\mathcal{A}$, we denote its set of admissible oxidation states by $\mathcal{Z}(a)\subset\mathbb{Z}$. We define a \emph{species} as an element--oxidation-state pair $s=(a,z)$, and let $\mathcal{S}=\{(a,z):a\in\mathcal{A},z\in\mathcal{Z}(a)\}$ denote the species vocabulary. Charge neutrality requires that the oxidation states of the atoms in a crystal sum to zero, $\sum_{i=1}^{N} z_i=0$. While the majority of synthesised inorganic crystals have a charge-neutral oxidation-state assignment, formal integer oxidation states can become ambiguous in certain classes of materials. In this work we assume all crystals have a charge-neutral integer oxidation-state assignment. We discuss the limitations of this assumption in Section~\ref{sec:conclusion}.

\noindent\textbf{Discrete diffusion.}\quad Discrete diffusion models generate categorical data by corrupting clean samples through a fixed forward process $q$ and learning a reverse process that progressively denoises them. Let $\boldsymbol{S}_0=(s_{0,1},\ldots,s_{0,N})\in\mathcal{S}^N$ denote a clean \emph{species assignment} over the finite vocabulary $\mathcal{S}$. Absorbing-state diffusion independently replaces each species with a mask token $m$ according to a schedule $\alpha_t\in[0,1]$ \citep{austin_structured_2021,sahoo_simple_2024}:
\begin{equation}
q(s_{t,i}\mid s_{0,i})
=
\begin{cases}
1-\alpha_t, & s_{t,i}=m,\\
\alpha_t, & s_{t,i}=s_{0,i},
\end{cases}
\qquad
q(\boldsymbol{S}_t\mid \boldsymbol{S}_0)
=
\prod_{i=1}^N
q(s_{t,i}\mid s_{0,i}).
\label{eq:masking}
\end{equation}

Under the $x_0$-parameterisation, a denoiser predicts a factorised distribution over the clean state,
\begin{equation}
p_\theta(\boldsymbol{S}_0\mid\boldsymbol{S}_t,t)
=
\prod_{i=1}^{N}
p_{\theta,i}(s_{0,i}\mid\boldsymbol{S}_t,t).
\label{eq:clean-pred}
\end{equation}
The clean-state prediction can be combined with the tractable posterior of the forward process to obtain a reverse transition,
\begin{equation}
p_\theta(\boldsymbol{S}_r\mid\boldsymbol{S}_t,t)
=
\prod_{i=1}^{N}
\sum_{s\in\mathcal{S}}
q(s_{r,i}\mid s_{t,i},s_{0,i}=s)\,
p_{\theta,i}(s\mid\boldsymbol{S}_t,t),
\qquad r<t.
\label{eq:reverse}
\end{equation}

\section{Proposed Method: OxiGen}
\label{sec:method}

\subsection{Species representation}
\label{sec:species}

Previous generative models for crystals typically represent a crystal as $\boldsymbol{M}=(\boldsymbol{A},\boldsymbol{F},\boldsymbol{L})$, where $\boldsymbol{A}$ specifies the element identity of each atomic site. In these models, oxidation states are implicit, and many generated crystals can only be charge balanced using uncommon oxidation states \citep{mastej_chemical_2026}. We instead characterise each site by a species $s=(a,z)$ that jointly specifies its element identity and oxidation state. OxiGen therefore generates a categorical species token at each site, drawn from a vocabulary $\mathcal{S}$ of species observed in synthesised materials. The element assignment $\boldsymbol{A}$ is replaced by a species assignment $\boldsymbol{S}=(s_{1},\ldots,s_{N})\in\mathcal{S}^N$, where each $s_i\in\mathcal{S}$ has element $a(s_i)$ and oxidation state $z(s_i)$, and the overall crystal is represented as $\boldsymbol{M}=(\boldsymbol{S},\boldsymbol{F},\boldsymbol{L})$. For the species assignment to be chemically meaningful, $\boldsymbol{S}$ must satisfy charge neutrality, requiring the oxidation states across all sites to sum to zero.

\subsection{Diffusion Model}
\label{sec:diffusion}

OxiGen is a crystal diffusion model that jointly generates the species assignment, fractional coordinates, and lattice matrix. Starting from a clean crystal $\boldsymbol{M}_0=(\boldsymbol{S}_0,\boldsymbol{F}_0,\boldsymbol{L}_0)$, the forward diffusion process independently corrupts each component to obtain a noisy crystal $\boldsymbol{M}_t=(\boldsymbol{S}_t,\boldsymbol{F}_t,\boldsymbol{L}_t)$,
\begin{equation}
q(\boldsymbol{M}_t\mid\boldsymbol{M}_r)=q(\boldsymbol{S}_t\mid\boldsymbol{S}_r)\,q(\boldsymbol{F}_t\mid\boldsymbol{F}_r)\,q(\boldsymbol{L}_t\mid\boldsymbol{L}_r)\qquad 0\leq r<t\leq 1.
\label{eq:joint-forward}
\end{equation}
The reverse process is parameterised by a shared SE(3)-equivariant graph neural network that takes the complete noisy crystal as input. Here, we adapt MatterGen \citep{zeni_generative_2025}, retaining the network architecture and continuous diffusion processes for the lattice matrix and fractional coordinates, and modify only the discrete representation and diffusion process over species. We focus on these modifications here; further architectural and training details are provided in Appendix~\ref{app:oxigen}.

For the species component, we use an absorbing-state discrete diffusion process based on \citet{sahoo_simple_2024}. Given a noisy crystal state $\boldsymbol{M}_t$, the denoiser defines a factorised conditional distribution over the clean species assignment,
\begin{equation}
    p_\theta(\boldsymbol{S}_0\mid\boldsymbol{M}_t,t)
    = \prod_{i=1}^N
    p_{\theta,i}(s_{0,i}\mid\boldsymbol{M}_t,t).
    \label{eq:species-pred}
\end{equation}
Although each factor $p_{\theta,i}$ may depend on the complete noisy crystal $\boldsymbol{M}_t$, independently sampling from the site-wise distributions can in general produce assignments that violate charge neutrality. Enforcing charge neutrality as a global constraint therefore requires coupling the site-wise predictions during inference.

\subsection{Structured Inference}
\label{sec:inference}

\begin{figure*}[t]
    \centering
    \resizebox{\linewidth}{!}{%
        \input{methods_figure}
    }

    \caption{
        Example of OxiGen discrete diffusion sampling over a species vocabulary of $\{\mathrm{Na}^+,\mathrm{O}^{2-}\}$. The noisy crystal state $\boldsymbol{M}_t$ is processed by the denoiser, which predicts site-wise species probabilities $p_{\theta,i}(s_{0,i}\mid\boldsymbol{M}_t,t)$. These are passed through the structured output layer to sample a charge-neutral assignment $\hat{\boldsymbol{S}}_0$, which is used in the reverse diffusion step.
    }
    \label{fig:method}
    \vspace{-0.5\baselineskip}
\end{figure*}

We define a structured joint distribution by conditioning the factorised distribution on charge neutrality,
\begin{equation}
\widetilde p_{\theta}(\boldsymbol{S}_0\mid \boldsymbol{M}_t,t)
=
\frac{1}{Z_{\theta}(\boldsymbol{M}_t,t)}
\ind\!\left[\sum_{i=1}^N z(s_{0,i})=0\right]
\prod_{i=1}^N
p_{\theta,i}(s_{0,i}\mid\boldsymbol{M}_t,t),
\label{eq:structured}
\end{equation}
where $Z_{\theta}(\boldsymbol{M}_t,t)=\sum_{\boldsymbol{S}_0\in\mathcal{S}^N}\ind\!\left[\sum_{i=1}^N z(s_{0,i})=0\right]\prod_{i=1}^Np_{\theta,i}(s_{0,i}\mid\boldsymbol{M}_t,t)$ is the normalising constant. The support of $ \widetilde p_{\theta}$ is restricted to charge-neutral assignments, so the constraint is satisfied by construction. However, computing $Z_\theta$ naively requires summing over an exponentially large space of species assignments. We instead represent the charge-neutrality constraint as a finite-state automaton, enabling exact inference by dynamic programming.

\noindent\textbf{Charge neutrality as a finite-state automaton.}\quad A \emph{finite-state automaton} consists of a finite set of states and a transition rule between them, with valid sequences corresponding to paths from an initial state to an accepting state. Charge neutrality has a simple additive structure that can be represented by a finite-state automaton. After assigning the first $i$ sites, whether the remaining sites can satisfy charge neutrality depends only on the cumulative charge
\begin{equation}
    c_i=\sum_{j=1}^i z(s_{0,j}).
\label{eq:charge}
\end{equation}
Charge-neutral assignments therefore correspond to paths through a layered finite-state automaton with initial state $c_0=0$, transition rule $c_i=c_{i-1}+z(s_{0,i})$, and accepting state $c_N=0$, as illustrated by the example in Figure~\ref{fig:method}. Dynamic programming over this automaton enables exact, tractable inference without summing over complete assignments. We adapt the standard forward--backward procedure for hidden Markov models \citep{rabiner_tutorial_1989}. Let $\mathcal{C}_i$ denote the set of reachable cumulative charges after assigning the first $i$ sites. The forward weight $F_i(c)$ gives the total probability mass of assignments to the first $i$ sites with cumulative charge $c$,
\begin{equation}
F_i(c)
=
\sum_{\substack{s_{0,1:i}\in\mathcal{S}^i\\
\sum_{j=1}^i z(s_{0,j})=c}}
\prod_{j=1}^i
p_{\theta,j}(s_{0,j}\mid\boldsymbol{M}_t,t),
\label{eq:fwd-weight}
\end{equation}
while the backward weight $B_i(c)$ gives the probability mass of assignments to the remaining sites that complete charge $c$ to zero,
\begin{equation}
B_i(c)
=
\sum_{\substack{s_{0,i+1:N}\in\mathcal{S}^{N-i}\\
c+\sum_{j=i+1}^N z(s_{0,j})=0}}
\prod_{j=i+1}^N
p_{\theta,j}(s_{0,j}\mid\boldsymbol{M}_t,t).
\label{eq:bwd-weight}
\end{equation}
These quantities satisfy the recursions
\begin{align}
F_0(c) &= \ind[c=0], &
F_i(c)
&=
\sum_{s\in\mathcal{S}}
p_{\theta,i}(s\mid\boldsymbol{M}_t,t)
F_{i-1}\!\left(c-z(s)\right),
\label{eq:fwd-rec}
\\
B_N(c) &= \ind[c=0], &
B_{i-1}(c)
&=
\sum_{s\in\mathcal{S}}
p_{\theta,i}(s\mid\boldsymbol{M}_t,t)
B_i\!\left(c+z(s)\right).
\label{eq:bwd-rec}
\end{align}

Although the recursions are sequential, inference is invariant to the ordering of the atomic sites because charge neutrality depends only on the sum of their oxidation states, which is commutative. After all $N$ sites have been assigned, $F_N(0)$ is by definition the total probability mass of charge-neutral assignments, which is the normalising constant $Z_{\theta}(\boldsymbol{M}_t,t)$. Similarly, $B_0(0)$ sums the same mass from the backward direction, so
\begin{equation}
Z_{\theta}(\boldsymbol{M}_t,t)=F_N(0)=B_0(0).
\label{eq:normaliser}
\end{equation}

To obtain the marginal probability that site $i$ is assigned species $s$, we sum the probability mass of all charge-neutral assignments consistent with $s_{0,i}=s$. For any cumulative charge $c$ before site $i$, $F_{i-1}(c)$ gives the mass of compatible prefix assignments, $p_{\theta,i}(s\mid\boldsymbol{M}_t,t)$ gives the site-wise probability of assigning $s$, and $B_i(c+z(s))$ gives the mass of suffix assignments that return the resulting charge to zero. Summing over all reachable $c$ and normalising by $Z_\theta$ therefore gives
\begin{equation}
\widetilde p_{\theta,i}(s\mid\boldsymbol{M}_t,t)
=
\frac{
p_{\theta,i}(s\mid\boldsymbol{M}_t,t)
}{
Z_{\theta}(\boldsymbol{M}_t,t)
}
\sum_{c\in\mathcal{C}_{i-1}}
F_{i-1}(c)\,
B_i\!\left(c+z(s)\right).
\label{eq:marginal}
\end{equation}
A complete forward or backward pass has time complexity $O(N^2|\mathcal{S}|(z_{\max}-z_{\min}))$, where $z_{\max}$ and $z_{\min}$ are the maximum and minimum oxidation states in the species vocabulary. We report empirical computational overhead in Appendix~\ref{app:oxigen-compute}.

\noindent\textbf{Charge-neutral MAP decoding.}\quad We can compute the MAP assignment under the charge-neutral structured distribution in Eq.~\eqref{eq:structured} using the Viterbi algorithm \citep{forney_viterbi_1973}. Let $V_i(c)$ denote the maximum probability of any species assignment to the first $i$ sites with cumulative charge $c$. The recursion is
\begin{equation}
V_i(c)
=
\max_{s\in\mathcal{S}}
p_{\theta,i}(s\mid\boldsymbol{M}_t,t)\,
V_{i-1}\!\left(c-z(s)\right),
\qquad
V_0(c)=\ind[c=0].
\label{eq:viterbi}
\end{equation}
For each $(i,c)$, we store the maximising species as a backpointer.

\begin{proposition}[Exact charge-neutral MAP decoding]
\label{prop:map}
Backtracking from $V_N(0)$ returns the exact MAP species assignment under $\widetilde p_{\theta}(\boldsymbol{S}_0\mid\boldsymbol{M}_t,t)$.
\end{proposition}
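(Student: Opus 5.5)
The plan is to reduce MAP under $\widetilde p_\theta$ to maximising the unnormalised product over charge-neutral assignments, and then prove the Viterbi recursion correct by induction on $i$. First, by Eq.~\eqref{eq:structured}, $\widetilde p_\theta(\boldsymbol{S}_0\mid\boldsymbol{M}_t,t)$ vanishes on non-neutral assignments and equals $Z_\theta^{-1}\prod_i p_{\theta,i}(s_{0,i}\mid\boldsymbol{M}_t,t)$ on neutral ones. Since $Z_\theta>0$ does not depend on $\boldsymbol{S}_0$ (assuming at least one neutral assignment has positive mass, so that $\widetilde p_\theta$ is well defined), the MAP assignment is any maximiser of $\prod_i p_{\theta,i}(s_{0,i})$ subject to $\sum_i z(s_{0,i})=0$.

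Next I would prove by induction on $i$ the invariant
\begin{equation*}
V_i(c)=\max\Bigl\{\textstyle\prod_{j=1}^i p_{\theta,j}(s_{0,j}\mid\boldsymbol{M}_t,t) : s_{0,1:i}\in\mathcal{S}^i,\ \sum_{j=1}^i z(s_{0,j})=c\Bigr\},
\end{equation*}
with the convention that the maximum over an empty set is $0$. I would also show that following backpointers from $(i,c)$ produces a prefix attaining this value. The base case $i=0$ is $V_0(c)=\ind[c=0]$: the empty prefix has product $1$ and charge $0$. For the inductive step, split any prefix of charge $c$ into its last species $s$ and a prefix of length $i-1$ with charge $c-z(s)$. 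Because all factors are nonnegative and the product factorises, $\max_{s}\max_{\text{prefix}} p_{\theta,i}(s)\prod_{j<i}p_{\theta,j}=\max_s p_{\theta,i}(s)V_{i-1}(c-z(s))$, which is exactly Eq.~\eqref{eq:viterbi}. The backpointer $s^\star$ at $(i,c)$, concatenated with the inductively optimal prefix stored for $(i-1,c-z(s^\star))$, attains $V_i(c)$.

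Finally, I would apply the invariant at $i=N$, $c=0$. Then $V_N(0)=\max_{\text{neutral }\boldsymbol{S}_0}\prod_i p_{\theta,i}$, and backtracking produces an assignment $\hat{\boldsymbol{S}}_0$ whose cumulative charges follow the recorded transitions from $c_N=0$ back to $c_0=0$, so $\hat{\boldsymbol{S}}_0$ is charge neutral and attains the maximum. By the first step it is therefore a MAP assignment of $\widetilde p_\theta$. Ties are harmless: the backpointer picks some maximiser, and the statement holds for any choice.

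The main obstacle is the optimal-substructure step. Here I need nonnegativity of the factors, so that maximising a product of a fixed factor and a subproblem value equals the product of the fixed factor and the subproblem maximum. I also need to check that states with $V_{i-1}=0$, which are unreachable or have zero mass, never corrupt the backtrack when $V_N(0)>0$. If $V_N(0)>0$, every state on the backtracked path has positive value, so it is consistent. This argument only requires restricting $c$ to the reachable set $\mathcal{C}_i$ with the implicit value $0$ outside it.
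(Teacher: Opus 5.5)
Your proposal is correct and follows the paper's proof. The paper likewise proves by induction on $i$, via the last-site decomposition, that $V_i(c)$ is the maximum unnormalised weight of a prefix with cumulative charge $c$, and then uses the fact that $Z_\theta$ does not depend on the assignment to identify the maximiser attaining $V_N(0)$ with the MAP of $\widetilde p_\theta$. Your extra care about nonnegativity, backpointers attaining the optimum, and zero-mass states fills in details the paper leaves implicit: when $V_N(0)>0$, every state on the backtracked path has positive value, so the path ends at $c_0=0$.
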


\noindent\textbf{Charge-neutral sampling.}\quad While the Viterbi recursion provides exact MAP decoding, the reverse diffusion process requires samples from the full structured distribution. During finite-step generation, a reverse step may unmask multiple sites simultaneously. Sampling these sites independently would discard the dependencies induced by charge neutrality. We therefore sample a joint clean species assignment from the structured distribution. Given cumulative charge $c_{i-1}$ after the first $i-1$ sites, the conditional distribution for site $i$ is
\begin{equation}
\widetilde{p}_\theta\!\left(
s_{0,i}=s
\mid c_{i-1},\boldsymbol{M}_t,t
\right)
=
\frac{
p_{\theta,i}(s\mid\boldsymbol{M}_t,t)\,
B_i\!\left(c_{i-1}+z(s)\right)
}{
B_{i-1}(c_{i-1})
},
\label{eq:sampler}
\end{equation}
where the backward weights $B_i(c)$ are computed using the backward recursion in Eq.~\eqref{eq:bwd-rec}. 

\begin{proposition}[Exact charge-neutral sampling]
\label{prop:sampling}
Sequentially sampling $s_{0,i}$ according to Eq.~\eqref{eq:sampler} for $i=1,\ldots,N$ produces an exact sample from
$\widetilde p_{\theta}(\boldsymbol{S}_0\mid\boldsymbol{M}_t,t)$.
\end{proposition}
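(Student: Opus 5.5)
The plan is to show that the product of the sequential conditionals in Eq.~\eqref{eq:sampler} telescopes to exactly the structured joint distribution in Eq.~\eqref{eq:structured}. First I would check that each conditional is a well-defined probability distribution whenever it is invoked. Nonnegativity is immediate. Summing the numerator over $s\in\mathcal{S}$ gives $\sum_s p_{\theta,i}(s\mid\boldsymbol{M}_t,t)\,B_i(c_{i-1}+z(s))$, which by the backward recursion Eq.~\eqref{eq:bwd-rec} equals $B_{i-1}(c_{i-1})$, so each conditional normalises to one. I also need the denominator to be nonzero along any path the sampler can reach, which I would prove by induction. At $i=1$ we have $c_0=0$ and $B_0(0)=Z_\theta>0$, where $Z_\theta>0$ is the standing assumption that at least one charge-neutral assignment has positive mass (otherwise Eq.~\eqref{eq:structured} is undefined). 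If $B_{i-1}(c_{i-1})>0$ and $s$ is sampled with positive probability, then $B_i(c_{i-1}+z(s))>0$, and this quantity is the next denominator $B_i(c_i)$.

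Next I would compute the probability that the sampler outputs a given assignment $\boldsymbol{S}_0=(s_{0,1},\ldots,s_{0,N})$. Writing $c_i$ as in Eq.~\eqref{eq:charge}, this probability is
\begin{equation*}
\prod_{i=1}^N \frac{p_{\theta,i}(s_{0,i}\mid\boldsymbol{M}_t,t)\,B_i(c_i)}{B_{i-1}(c_{i-1})}
=\frac{B_N(c_N)}{B_0(c_0)}\prod_{i=1}^N p_{\theta,i}(s_{0,i}\mid\boldsymbol{M}_t,t),
\end{equation*}
because the intermediate $B$ factors cancel. The remaining factors are handled with the boundary conditions. Since $c_0=0$, Eq.~\eqref{eq:normaliser} gives $B_0(0)=Z_\theta(\boldsymbol{M}_t,t)$, and Eq.~\eqref{eq:bwd-rec} gives $B_N(c_N)=\ind[c_N=0]=\ind[\sum_i z(s_{0,i})=0]$. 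Substituting these, the product equals the right-hand side of Eq.~\eqref{eq:structured} exactly.

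The one subtle point is the telescoping cancellation. It divides by $B_i(c_i)$, which fails for prefixes of zero probability. I would handle this by splitting into two cases. For any assignment reached with positive probability, every $B_i(c_i)$ is positive by the induction above, so the cancellation is valid. Any assignment the sampler never reaches must then have $\widetilde p_\theta=0$, which follows because both distributions sum to one. Alternatively, one can argue directly: $p_{\theta,i}(s_{0,i}\mid\boldsymbol{M}_t,t)>0$ for all $i$ together with charge neutrality forces every $B_i(c_i)\ge\prod_{j>i}p_{\theta,j}(s_{0,j}\mid\boldsymbol{M}_t,t)>0$, by Eq.~\eqref{eq:bwd-weight}. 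Either route shows that the two distributions agree on every assignment.
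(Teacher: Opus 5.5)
Your proposal is correct and follows the same route as the paper's proof. The product of the sequential conditionals telescopes to $B_N(c_N)\prod_{i} p_{\theta,i}(s_{0,i}\mid\boldsymbol{M}_t,t)/B_0(0)$, and the boundary values $B_N(c)=\ind[c=0]$ and $B_0(0)=Z_\theta(\boldsymbol{M}_t,t)$ give $\widetilde p_\theta$ on charge-neutral assignments and zero otherwise; your extra checks (normalisation via the backward recursion, positive denominators along reachable prefixes) make rigorous a division-by-zero point the paper passes over.
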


At each OxiGen reverse step, we first sample a complete charge-neutral clean species assignment $\hat{\boldsymbol{S}}_0\sim\widetilde p_{\theta}(\boldsymbol{S}_0\mid\boldsymbol{M}_t,t)$ and then sample $\boldsymbol{S}_r\sim q(\boldsymbol{S}_r\mid\boldsymbol{S}_t,\hat{\boldsymbol{S}}_0)$ from the tractable posterior of the forward absorbing process. This procedure avoids explicitly marginalising over clean assignments as in Eq.~\eqref{eq:reverse}, but is equivalent to sampling from the corresponding marginal reverse transition under the structured distribution. It therefore preserves the dependencies among sites unmasked within the same reverse step and guarantees that the final species assignment is charge neutral. For OxiGen, the structured output layer is applied only during sampling, while training uses the standard factorised masked-diffusion objective. We derive the corresponding structured training objective in Appendix~\ref{app:diffusion} and evaluate it in Section~\ref{sec:de-novo}.

\section{Experiments}
\label{sec:experiments}

\subsection{Oxidation-State Prediction}
\label{sec:os-pred}

Training OxiGen requires crystal structures labelled with oxidation states, which are not provided for all structures in public crystal-structure databases such as the Materials Project \citep{jain_commentary_2013}. We therefore train a crystal transformer adapted from \citet{veljkovic_crystalite_2026} for oxidation-state prediction. To guarantee charge-neutral predictions, we use the structured Viterbi decoding introduced in Section~\ref{sec:inference}. We additionally modify the dynamic program so that mixed valence, where different sites of the same element have different oxidation states, is only permitted for elements with established mixed-valence chemistry. Further details are provided in Appendix~\ref{app:grouped-dp}. We compare against \texttt{BVAnalyzer} \citep{ong_python_2013}, a standard method based on empirical bond-valence analysis that assigns oxidation states subject to charge neutrality, and BERTOS \citep{fu_composition_2023}, a composition-only transformer that does not guarantee charge-neutral predictions.

\noindent\textbf{Evaluation.}\quad We evaluate oxidation-state prediction on 28,492 experimental crystal structures from the Inorganic Crystal Structure Database (ICSD) \citep{zagorac_recent_2019}, a curated, licensed database that provides oxidation-state labels for each atomic site. We use a composition-disjoint 22,879/2,831/2,782 train/validation/test split. We reimplement BERTOS and train it on the same split. For our model, we evaluate both standard argmax decoding and structured Viterbi decoding. Following \citet{fu_composition_2023}, we report the charge-neutrality rate, defined as the fraction of crystals with charge-neutral predictions; site accuracy, the fraction of atomic sites assigned the correct oxidation state; and crystal accuracy, the fraction of crystals for which all atomic sites are assigned correctly. \texttt{BVAnalyzer} fails for 9.1\% of test structures for which no assignment is compatible with the bond-valence model; we count these cases as incorrect for site and crystal accuracy and exclude them when computing charge neutrality. Further details on dataset curation, model architecture, and training are provided in Appendix~\ref{app:os-pred}. Table~\ref{tab:os-pred} shows that our model achieves higher site and crystal accuracy than both baselines. Charge-neutral Viterbi decoding enforces charge neutrality for all predictions and further improves site and crystal accuracy over argmax decoding.

\noindent\textbf{MP-20-OS dataset.}\quad We use our predictor to label MP-20, a subset of the Materials Project \citep{jain_commentary_2013} containing crystal structures with up to 20 atoms in the unit cell. We use the splits provided by \citet{zeni_generative_2025} and label each split independently. Following convention, structures composed exclusively of metals are assigned oxidation state zero at every site, while all remaining structures are labelled using our predictor. We remove 5.9\% of structures using the filtering procedure described in Appendix~\ref{app:mp20-labelling}, including 3.7\% with no charge-neutral oxidation-state assignment. The resulting MP-20-OS dataset contains 25,536/8,495/8,545 train/validation/test structures.

\begin{table}[t]
    \vspace{-1\baselineskip}
    \centering
    \small
    \setlength{\tabcolsep}{3.2pt}
    \renewcommand{\arraystretch}{1.08}

    \caption{Oxidation-state prediction results, reported as mean $\pm$ std over five independently seeded training runs; the deterministic BVAnalyzer is reported as a single value. Best values are shown in \textbf{bold} and second-best values are \underline{underlined}.}
    \label{tab:os-pred}
    \vspace{0.5\baselineskip}

    \begin{tabular}{@{}lcccc@{}}
        \toprule
        Method
        & Decoding
        & Charge Neutrality (\%) $\uparrow$
        & Site Accuracy (\%) $\uparrow$
        & Crystal Accuracy (\%) $\uparrow$ \\
        \midrule

        \texttt{BVAnalyzer}
            & --
            & $\mathbf{100.00}$
            & 88.19
            & 86.02 \\

        BERTOS
            & Argmax
            & $92.83\msmall{\pm 0.22}$
            & $97.81\msmall{\pm 0.17}$
            & $90.93\msmall{\pm 0.24}$ \\

        \midrule

        \multirow{2}{*}{\textbf{Ours}}
            & Argmax
            & $\underline{94.37}\msmall{\pm 0.26}$
            & $\underline{98.47}\msmall{\pm 0.10}$
            & $\underline{92.79}\msmall{\pm 0.18}$ \\

            & Viterbi
            & $\mathbf{100.00}\msmall{\pm 0.00}$
            & $\mathbf{98.74}\msmall{\pm 0.09}$
            & $\mathbf{96.00}\msmall{\pm 0.25}$ \\

        \bottomrule
    \end{tabular}

    \vspace{-0.5\baselineskip}
\end{table}

\subsection{Crystal Generation}
\label{sec:de-novo}

We evaluate OxiGen on \textit{de novo} crystal generation using MP-20-OS and compare against five strong baseline methods: DiffCSP \citep{jiao_crystal_2023}, MatterGen \citep{zeni_generative_2025}, FlowMM \citep{miller_flowmm_2024}, OMatG \citep{hollmer_open_2025}, and Crystalite \citep{veljkovic_crystalite_2026}. We retrain all methods on MP-20-OS using the authors’ original code and reported hyperparameters over five independent random seeds, and sample 10,240 crystals from each method for each seed.

\begin{table*}[t]
    \centering
    \small
    \setlength{\tabcolsep}{4.2pt}
    \renewcommand{\arraystretch}{1.10}
    \caption{\textit{De novo} generation results, reported as mean $\pm$ std over five independently seeded training runs, with 10,240 generated structures per seed. Best values are shown in \textbf{bold} and second-best values are \underline{underlined}. MP-20-OS denotes the test set.}
    \vspace{0.5\baselineskip}
    \label{tab:de-novo}

    \begin{tabular}{@{}lcccccc@{}}
        \toprule
        \multirow{3}{*}{Method}
        & \multicolumn{3}{c}{\textbf{Oxidation-State Fidelity}}
        & \multicolumn{3}{c}{\textbf{Stability \& Relaxation}} \\
        \cmidrule(lr){2-4}
        \cmidrule(lr){5-7}

        & Comp. Val.
        & $f_{\mathrm{OS}}$
        & $d_{\mathrm{OS}}$
        & S.U.N.
        & $E_{\mathrm{hull}}$
        & RMSD \\

        & (\%) $\uparrow$
        & (\%)
        & $\downarrow$
        & (\%) $\uparrow$
        & (eV/atom) $\downarrow$
        & (\AA) $\downarrow$ \\
        \midrule

        MP-20-OS
            & 91.73
            & 53.78
            & --
            & --
            & --
            & -- \\

        \midrule

        DiffCSP
            & $78.47\msmall{\pm 2.55}$
            & $35.67\msmall{\pm 1.05}$
            & $0.1566\msmall{\pm 0.0037}$
            & $18.24\msmall{\pm 1.38}$
            & $0.2007\msmall{\pm 0.0101}$
            & $0.3555\msmall{\pm 0.0427}$ \\

        MatterGen
            & $81.18\msmall{\pm 0.94}$
            & $37.89\msmall{\pm 1.02}$
            & $0.1537\msmall{\pm 0.0039}$
            & $\underline{24.54}\msmall{\pm 0.91}$
            & $0.1799\msmall{\pm 0.0085}$
            & $\underline{0.1747}\msmall{\pm 0.0203}$ \\

        FlowMM
            & $79.87\msmall{\pm 0.20}$
            & $37.34\msmall{\pm 0.65}$
            & $0.1626\msmall{\pm 0.0037}$
            & $13.85\msmall{\pm 0.39}$
            & $0.2481\msmall{\pm 0.0104}$
            & $0.6704\msmall{\pm 0.0347}$ \\

        OMatG
            & $82.32\msmall{\pm 0.48}$
            & $\underline{43.08}\msmall{\pm 0.35}$
            & $\underline{0.1284}\msmall{\pm 0.0060}$
            & $19.03\msmall{\pm 0.78}$
            & $0.1831\msmall{\pm 0.0030}$
            & $0.5244\msmall{\pm 0.0278}$ \\

        Crystalite
            & $\underline{82.66}\msmall{\pm 0.51}$
            & $42.45\msmall{\pm 1.04}$
            & $0.1350\msmall{\pm 0.0017}$
            & $22.83\msmall{\pm 0.76}$
            & $\mathbf{0.1437}\msmall{\pm 0.0083}$
            & $0.2496\msmall{\pm 0.0198}$ \\

        \midrule

        \textbf{OxiGen}
            & $\mathbf{90.65}\msmall{\pm 0.94}$
            & $\mathbf{53.01}\msmall{\pm 0.96}$
            & $\mathbf{0.0911}\msmall{\pm 0.0028}$
            & $\mathbf{26.38}\msmall{\pm 1.01}$
            & $\underline{0.1575}\msmall{\pm 0.0080}$
            & $\mathbf{0.1679}\msmall{\pm 0.0117}$ \\

        \bottomrule
    \end{tabular}
    \vspace{-1\baselineskip}
\end{table*}

\noindent\textbf{Evaluation metrics.}\quad Compositional validity, computed using \texttt{SMACT} \citep{davies_smact_2019}, measures whether a composition has a valid charge-neutral oxidation-state assignment. Although every crystal in MP-20-OS is labelled with a charge-neutral assignment, the test-set compositional validity is not perfect because the \texttt{SMACT} evaluation additionally requires electronegativity balance and does not permit mixed-valence assignments. Since compositional validity does not account for how common the required oxidation states are, we report two additional metrics. The oxidation-state frequency score $f_{\mathrm{OS}}$ is the geometric mean of the ICSD frequencies of distinct species in an assignment \citep{mastej_chemical_2026}, and measures how commonly those oxidation states occur in synthesised materials. We report the mean across crystals, with the goal of matching the test-set value rather than simply favouring the most common oxidation states. The oxidation-state distribution distance $d_{\mathrm{OS}}$ measures the Jensen--Shannon distance between the oxidation-state distributions of the generated and test sets for each element, reported as the mean across elements. For a consistent comparison across methods, we enumerate all valid assignments for each crystal using \texttt{SMACT}, rather than using oxidation states generated by OxiGen or inferred by our predictor. For $f_{\mathrm{OS}}$ we take the maximum score across these assignments and for $d_{\mathrm{OS}}$ we construct the distributions by marginalising over assignments weighted by their ICSD occurrence frequencies. Metallic alloys and elemental structures are excluded from all three metrics because oxidation states are not meaningfully defined for these systems. Full definitions and evaluation details are provided in Appendix~\ref{app:os-metrics}. 

\noindent\textbf{Oxidation-state fidelity.}\quad OxiGen substantially improves oxidation-state fidelity over the baselines, closely matching the MP-20-OS test-set values for compositional validity and oxidation-state frequency score while achieving the lowest oxidation-state distribution distance (Table~\ref{tab:de-novo}). As shown in Figure~\ref{fig:os-fidelity}, the baselines rely more heavily on uncommon oxidation states, while OxiGen closely reproduces both the test-set compositional validity curve and oxidation-state distributions.  

\noindent\textbf{Stability, uniqueness, and novelty.}\quad Alongside oxidation-state fidelity, we evaluate the ability of each method to generate stable, unique, and novel crystals. Since \textit{de novo} generation typically involves a trade-off between stability and novelty \citep{park_guiding_2026}, we use the proportion of stable, unique, and novel (S.U.N.) structures as the primary evaluation metric. Following \citet{zeni_generative_2025}, generated structures are relaxed with the MatterSim machine-learning interatomic potential \citep{yang_mattersim_2024}, and the predicted energies are used to compute energy above the convex hull, $E_{\mathrm{hull}}$. A structure is considered stable if $E_{\mathrm{hull}} \leq 0.1$ eV/atom. We additionally report the mean $E_{\mathrm{hull}}$ and the root-mean-square deviation (RMSD) between the generated and relaxed structures. Further evaluation details are provided in Appendix~\ref{app:sun-eval}. OxiGen achieves the highest S.U.N. rate, lowest relaxation RMSD and second-lowest mean $E_{\mathrm{hull}}$ among the evaluated methods (Table~\ref{tab:de-novo}). Relative to MatterGen, the increase in the S.U.N. rate corresponds to a higher proportion of stable structures, accompanied by a decrease in uniqueness and novelty (Appendix Table~\ref{tab:app-de-novo}), consistent with OxiGen generating fewer crystals with chemically implausible compositions. We further evaluate OxiGen using LeMat-GenBench \citep{betala_lematgenbench_2026} and the proxy metrics proposed by \citet{xie_crystal_2022} (Appendix Tables~\ref{tab:app-proxy} and~\ref{tab:leaderboard}). Overall, OxiGen substantially improves oxidation-state fidelity while maintaining strong performance on standard stability metrics.

\begin{figure}
    \centering    \includegraphics{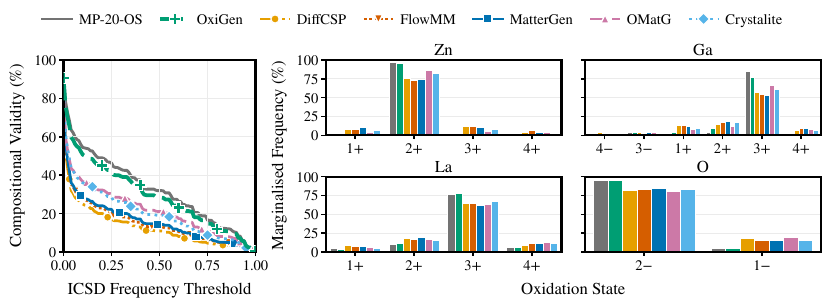}
    \vspace{-1\baselineskip}
    \caption{\textbf{Left}: Compositional validity as a function of the ICSD frequency threshold required for an oxidation state to be considered admissible. \textbf{Right}: Marginal oxidation-state distributions for selected elements. MP-20-OS denotes the test set in both panels.}
    \label{fig:os-fidelity}
    \vspace{-0.5\baselineskip}
\end{figure}

\begin{table}[t] 
\vspace{-1\baselineskip}
\centering 
\small 
\setlength{\tabcolsep}{4.5pt} 
\renewcommand{\arraystretch}{1.08} 
\caption{\textit{De novo} generation ablation results, reported as mean $\pm$ std over five independently seeded training runs, with 10,240 generated structures per seed. Best values are shown in \textbf{bold} and second-best values are \underline{underlined}. The shaded row is OxiGen.}
\vspace{0.5\baselineskip}
\label{tab:ablation}

\begin{tabular}{@{}lcccccc@{}}
    \toprule
    \multirow{2}{*}{Representation}
    & \multicolumn{2}{c}{Structured Output Layer}
    & Charge Neutrality
    & Comp. Val.
    & $f_{\mathrm{OS}}$
    & $d_{\mathrm{OS}}$ \\
    \cmidrule(lr){2-3}
    & Training
    & Sampling
    & (\%) $\uparrow$
    & (\%) $\uparrow$
    & (\%)
    & $\downarrow$ \\
    \midrule

    Element
        & \xmark
        & \xmark
        & --
        & $81.12\msmall{\pm 0.46}$
        & $37.78\msmall{\pm 1.16}$
        & $0.1589\msmall{\pm 0.0050}$ \\

    Species
        & \xmark
        & \xmark
        & $\underline{39.08}\msmall{\pm 2.60}$
        & $82.31\msmall{\pm 0.77}$
        & $40.54\msmall{\pm 1.29}$
        & $0.1368\msmall{\pm 0.0047}$ \\

    \rowcolor{gray!15}
    Species
        & \xmark
        & \cmark
        & $\mathbf{100.00}\msmall{\pm 0.00}$
        & $\underline{90.65}\msmall{\pm 0.94}$
        & $\mathbf{53.01}\msmall{\pm 0.96}$
        & $\underline{0.0911}\msmall{\pm 0.0028}$ \\

    Species
        & \cmark
        & \cmark
        & $\mathbf{100.00}\msmall{\pm 0.00}$
        & $\mathbf{90.70}\msmall{\pm 0.67}$
        & $\underline{52.65}\msmall{\pm 0.76}$
        & $\mathbf{0.0874}\msmall{\pm 0.0029}$ \\

    \bottomrule
\end{tabular}
\vspace{-1\baselineskip}
\end{table}

\noindent\textbf{Ablation study.}\quad
We ablate the species representation and structured output layer to isolate the contribution of each component to OxiGen (Table~\ref{tab:ablation}). All ablation models use the same masked discrete diffusion process as OxiGen, differing only in the discrete state representation and whether the structured output layer is applied during training and sampling. We report the three oxidation-state metrics and the fraction of generated species assignments that satisfy charge neutrality, with the corresponding S.U.N. metrics reported in Appendix Table~\ref{tab:app-ablation-stable}. The species representation alone provides only a modest improvement over the element representation, with the majority of generated assignments violating charge neutrality. Structured sampling enforces charge neutrality by construction and substantially improves all three oxidation-state metrics. These results suggest that explicitly representing oxidation states is insufficient on its own, and that enforcing global charge neutrality during sampling is needed to ensure chemically meaningful oxidation-state assignments. Training with the structured objective does not provide any consistent additional improvement while adding training overhead, and is therefore not used to train OxiGen.

\noindent\textbf{Conditional generation.}\quad
Inverse design requires generating crystals with desired properties while maintaining generation quality. This becomes more challenging for targets that are weakly represented in the training data, as conditioning can push generation away from high-density regions of the learned distribution. Following \citet{zeni_generative_2025}, we fine-tune MatterGen and OxiGen on band-gap labels from MP-20-OS and generate 1,024 crystals conditioned on target band gaps of 1, 3, 5, and 7~eV using classifier-free guidance. These targets span the MP-20-OS band-gap distribution, which contains progressively fewer structures at higher band gaps (Figure~\ref{fig:band-gap}). Band gaps are predicted from the relaxed structures using iComFormer, an SE(3)-invariant crystal graph transformer for property prediction \citep{yan_complete_2024}. Further details on model fine-tuning and training the band-gap predictor are provided in Appendix~\ref{app:band-gap}. As shown in Figure~\ref{fig:band-gap}, both models shift the predicted band-gap distributions towards the conditioning targets. However, MatterGen's compositional validity decreases sharply as the target band gap increases, whereas OxiGen maintains consistently high validity across the full range. OxiGen also achieves a higher S.U.N. rate than MatterGen at all four conditioning targets (Appendix Table~\ref{tab:app-bandgap-conditional}). These results demonstrate the value of structured sampling for inverse design, where conditioning on sparsely represented targets can otherwise lead to chemically invalid compositions.

\begin{figure}[t]
    \centering
    \includegraphics{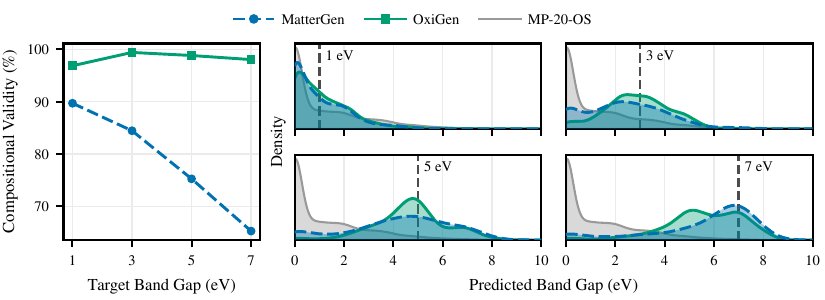}
    \vspace{-1.5\baselineskip}
    \caption{\textbf{Left}: Compositional validity at target band gaps of 1, 3, 5, and 7 eV. \textbf{Right}: Kernel density estimates of the predicted band-gap distributions for structures generated at each target. Results are shown for a single fine-tuning seed. Dashed vertical lines indicate the conditioning targets and the grey density shows the band-gap distribution for the MP-20-OS test set.}
    \label{fig:band-gap}
    \vspace{-1\baselineskip}
\end{figure}

\section{Conclusions}
\label{sec:conclusion}

In this work, we introduced OxiGen, a diffusion model for oxidation-state-aware crystal generation that explicitly represents oxidation states during generation and enforces charge neutrality using dynamic programming over a finite-state automaton. Across experiments, OxiGen more faithfully reproduces the oxidation-state distributions of synthesised materials, achieves the highest rate of stable, unique, and novel crystals among evaluated methods, and maintains high compositional validity during band-gap-conditioned generation. The proposed species representation and structured output layer require only minimal modification to the underlying diffusion model architecture, so we expect they could be readily incorporated into other crystal generative models. Overall, our results demonstrate how incorporating prior chemical knowledge through simple heuristics and constraints can guide generative models towards more chemically plausible materials.

\noindent\textbf{Limitations.}\quad OxiGen inherits the limitations of formal integer oxidation states, which can be ambiguous or undefined for systems with substantial covalency, charge delocalisation, or metallic bonding \citep{karen_oxidation_2015,walsh_oxidation_2018}. As a result, OxiGen may mischaracterise or exclude valid materials for which a charge-neutral integer oxidation-state assignment is not chemically meaningful. OxiGen also relies on oxidation-state assignments inferred by our predictor, which may carry errors or biases. More broadly, charge neutrality is only one aspect of chemical validity: OxiGen does not enforce other constraints, such as electronegativity balance or consistency between an oxidation state and its local coordination and bonding environment. Future work could incorporate additional chemical constraints and explore more flexible inductive biases that retain the chemical information captured by oxidation states while moving beyond discrete formal assignments.

\newpage

\section*{AI Disclosure}

In this work, we used generative AI tools to assist with the research process, including formalising the methodology, writing code, and implementing baselines. Generative AI tools were also used to help identify relevant literature, edit parts of the manuscript for readability, and create figures. The authors proposed the research question, developed the core methodology and experimental design and analysed the results. We have reviewed all AI-assisted work and take responsibility for the final content of this work, including text, claims or artefacts produced with the aid of generative AI.

\section*{Reproducibility Statement}

We provide the information needed to reproduce our methodology and experiments. The diffusion model and structured inference procedure are described in Section~\ref{sec:method}, with further details in Appendix~\ref{app:oxigen}. Dataset curation and preprocessing are described in Appendix~\ref{app:os-pred}. Model architectures and hyperparameters are given in Appendices~\ref{app:os-models}, \ref{app:oxigen-train} and~\ref{app:band-gap}, and the evaluation procedure is described in Appendix~\ref{app:os-eval}. The source code and model checkpoints will be made publicly available upon publication.

\section*{Acknowledgements}

D.J., K.E.J., A.M.G. acknowledge the AI for Chemistry: AIchemy hub for funding (EPSRC grants EP/Y028775/1 and EP/Y028759/1). K.E.J. acknowledges a Royal Society Faraday Discovery Fellowship. The authors acknowledge the use of resources provided by the Isambard-AI National AI Research Resource (AIRR). Isambard-AI is operated by the University of Bristol and is funded by the UK Government’s Department for Science, Innovation and Technology (DSIT) via UK Research and Innovation; and the Science and Technology Facilities Council [ST/AIRR/I-A-I/1023].

\bibliographystyle{iclr2027_conference}
\bibliography{references}

@inproceedings{ahmed_semantic_2022,
	title = {Semantic {Probabilistic} {Layers} for {Neuro}-{Symbolic} {Learning}},
	volume = {35},
	url = {https://proceedings.neurips.cc/paper_files/paper/2022/hash/c182ec594f38926b7fcb827635b9a8f4-Abstract-Conference.html},
	doi = {10.52202/068431-2171},
	booktitle = {Advances in {Neural} {Information} {Processing} {Systems}},
	publisher = {Curran Associates, Inc.},
	author = {Ahmed, Kareem and Teso, Stefano and Chang, Kai-Wei and Van den Broeck, Guy and Vergari, Antonio},
	year = {2022},
	pages = {29944--29959},
}

@misc{ai4science_crystal-gfn_2023,
	title = {Crystal-{GFN}: sampling crystals with desirable properties and constraints},
	url = {https://arxiv.org/abs/2310.04925v1},
	doi = {10.48550/arXiv.2310.04925},
	publisher = {arXiv},
	author = {{Mila AI4Science} and Hernandez-Garcia, Alex and Duval, Alexandre and Volokhova, Alexandra and Bengio, Yoshua and Sharma, Divya and Carrier, Pierre Luc and Benabed, Yasmine and Koziarski, Michał and Schmidt, Victor},
	month = oct,
	year = {2023},
	note = {arXiv:2310.04925v1 [cs.LG]},
}

@inproceedings{austin_structured_2021,
	title = {Structured {Denoising} {Diffusion} {Models} in {Discrete} {State}-{Spaces}},
	volume = {34},
	url = {https://proceedings.neurips.cc/paper/2021/hash/958c530554f78bcd8e97125b70e6973d-Abstract.html},
	booktitle = {Advances in {Neural} {Information} {Processing} {Systems}},
	publisher = {Curran Associates, Inc.},
	author = {Austin, Jacob and Johnson, Daniel D. and Ho, Jonathan and Tarlow, Daniel and van den Berg, Rianne},
	year = {2021},
	pages = {17981--17993},
}

@inproceedings{cardei_constrained_2025,
	author = {Cardei, Michael and Christopher, Jacob K. and Kailkhura, Bhavya and Hartvigsen, Tom and Fioretto, Ferdinando},
	booktitle = {Advances in Neural Information Processing Systems},
	doi = {10.52202/085713-0415},
	editor = {D. Belgrave and C. Zhang and H. Lin and R. Pascanu and P. Koniusz and M. Ghassemi and N. Chen},
	pages = {12383--12416},
	publisher = {Curran Associates, Inc.},
	title = {Constrained Discrete Diffusion},
	url = {https://proceedings.neurips.cc/paper_files/paper/2025/file/124c1cc9f7750f512ed81cdbf2fd5bf5-Paper-Conference.pdf},
	volume = {38},
	year = {2025},
}

@inproceedings{chang_space_2025,
	title = {Space {Group} {Equivariant} {Crystal} {Diffusion}},
	volume = {38},
	url = {https://papers.nips.cc/paper_files/paper/2025/hash/697b2f31f99fb79f8a0a16e923b2471d-Abstract-Conference.html},
	doi = {10.52202/085713-2439},
	booktitle = {Advances in {Neural} {Information} {Processing} {Systems}},
	publisher = {Curran Associates, Inc.},
	author = {Chang, Rees and Pak, Angela and Guerra, Alex and Zhan, Ni and Richardson, Nick and Ertekin, Elif and Adams, Ryan P.},
	year = {2025},
	pages = {72772--72805},
}

@article{cheetham_artificial_2024,
	title = {Artificial {Intelligence} {Driving} {Materials} {Discovery}? {Perspective} on the {Article}: {Scaling} {Deep} {Learning} for {Materials} {Discovery}},
	volume = {36},
	url = {https://doi.org/10.1021/acs.chemmater.4c00643},
	doi = {10.1021/acs.chemmater.4c00643},
	number = {8},
	journal = {Chemistry of Materials},
	author = {Cheetham, Anthony K. and Seshadri, Ram},
	month = apr,
	year = {2024},
	pages = {3490--3495},
}

@article{cheng_enhancing_2026,
	title = {Enhancing materials discovery with valence-constrained design in generative modeling},
	url = {https://www.nature.com/articles/s43588-026-01037-2},
	doi = {10.1038/s43588-026-01037-2},
	journal = {Nature Computational Science},
	publisher = {Nature Publishing Group},
	author = {Cheng, Mouyang and Luo, Weiliang and Tang, Hao and Yu, Bowen and Cheng, Yongqiang and Xie, Weiwei and Li, Ju and Kulik, Heather J. and Li, Mingda},
	month = aug,
	year = {2026},
}

@inproceedings{christopher_neuro-symbolic_2025,
	title = {Neuro-{Symbolic} {Generative} {Diffusion} {Models} for {Physically} {Grounded}, {Robust}, and {Safe} {Generation}},
	url = {https://proceedings.mlr.press/v288/christopher25a.html},
	booktitle = {Proceedings of the International Conference on Neuro-symbolic Systems},
	publisher = {PMLR},
	author = {Christopher, Jacob K. and Cardei, Michael and Liang, Jinhao and Fioretto, Ferdinando},
	month = {28--30 May},
	year = {2025},
	pages = {188--213},
    volume = {288},
}

@misc{dang_constrained_2026,
	title = {Constrained {Decoding} for {Diffusion} {Language} {Models} via {Efficient} {Inference} over {Finite} {Automata}},
	url = {https://arxiv.org/abs/2607.07026},
	doi = {10.48550/arXiv.2607.07026},
	publisher = {arXiv},
	author = {Dang, Meihua and Ermon, Stefano},
	month = jul,
	year = {2026},
	note = {arXiv:2607.07026 [cs.LG]},
}

@article{davies_smact_2019,
	title = {{SMACT}: {Semiconducting} {Materials} by {Analogy} and {Chemical} {Theory}},
	volume = {4},
	url = {https://joss.theoj.org/papers/10.21105/joss.01361},
	doi = {10.21105/joss.01361},
	number = {38},
	journal = {Journal of Open Source Software},
	author = {Davies, Daniel W. and Butler, Keith T. and Jackson, Adam J. and Skelton, Jonathan M. and Morita, Kazuki and Walsh, Aron},
	month = jun,
	year = {2019},
	pages = {1361},
}

@inproceedings{dragone_neuro-symbolic_2021,
	series = {{CEUR} {Workshop} {Proceedings}},
	title = {Neuro-{Symbolic} {Constraint} {Programming} for {Structured} {Prediction}},
	volume = {2986},
	url = {https://ceur-ws.org/Vol-2986/},
	booktitle = {Proceedings of the 15th {International} {Workshop} on {Neural}-{Symbolic} {Learning} and {Reasoning}},
	publisher = {CEUR-WS.org},
	author = {Dragone, Paolo and Teso, Stefano and Passerini, Andrea},
	year = {2021},
	pages = {6--14},
}

@article{forney_viterbi_1973,
	title = {The {Viterbi} algorithm},
	volume = {61},
	url = {https://ieeexplore.ieee.org/document/1450960},
	doi = {10.1109/PROC.1973.9030},
	number = {3},
	journal = {Proceedings of the IEEE},
	author = {Forney, G.D.},
	month = mar,
	year = {1973},
	pages = {268--278},
}

@article{fu_composition_2023,
	title = {Composition {Based} {Oxidation} {State} {Prediction} of {Materials} {Using} {Deep} {Learning} {Language} {Models}},
	volume = {10},
	url = {https://onlinelibrary.wiley.com/doi/abs/10.1002/advs.202301011},
	doi = {10.1002/advs.202301011},
	number = {28},
	journal = {Advanced Science},
	author = {Fu, Nihang and Hu, Jeffrey and Feng, Ying and Morrison, Gregory and zur Loye, Hans-Conrad and Hu, Jianjun},
	year = {2023},
	pages = {2301011},
}

@inproceedings{gasteiger_gemnet_2021,
	title = {{GemNet}: {Universal} {Directional} {Graph} {Neural} {Networks} for {Molecules}},
	volume = {34},
	url = {https://proceedings.neurips.cc/paper/2021/hash/35cf8659cfcb13224cbd47863a34fc58-Abstract.html},
	booktitle = {Advances in {Neural} {Information} {Processing} {Systems}},
	publisher = {Curran Associates, Inc.},
	author = {Gasteiger, Johannes and Becker, Florian and Günnemann, Stephan},
	year = {2021},
	pages = {6790--6802},
}

@inproceedings{hollmer_open_2025,
	title = {Open {Materials} {Generation} with {Stochastic} {Interpolants}},
	url = {https://proceedings.mlr.press/v267/hollmer25a.html},
	booktitle = {Proceedings of the 42nd {International} {Conference} on {Machine} {Learning}},
	publisher = {PMLR},
	volume = {267},
	author = {Höllmer, Philipp and Egg, Thomas and Martirossyan, Maya and Fuemmeler, Eric and Shui, Zeren and Gupta, Amit and Prakash, Pawan and Roitberg, Adrian and Liu, Mingjie and Karypis, George and Transtrum, Mark and Hennig, Richard and Tadmor, Ellad B. and Martiniani, Stefano},
	month = jul,
	year = {2025},
	pages = {23417--23450},
}

@inproceedings{hoernle_multiplexnet_2022,
	title = {{MultiplexNet}: {Towards} {Fully} {Satisfied} {Logical} {Constraints} in {Neural} {Networks}},
	url = {https://www.research.ed.ac.uk/en/publications/multiplexnet-towards-fully-satisfied-logical-constraints-in-neura/},
	doi = {10.1609/aaai.v36i5.20512},
	booktitle = {Proceedings of the {AAAI} Conference on Artificial Intelligence},
	volume = {36(5)},
	publisher = {Association for the Advancement of Artificial Intelligence},
	author = {Hoernle, Nick and Karampatsis, Rafael and Belle, Vaishak and Gal, Yakov Kobi},
	month = jun,
	year = {2022},
	pages = {5700--5709},
}

@article{jain_commentary_2013,
	title = {Commentary: {The} {Materials} {Project}: {A} materials genome approach to accelerating materials innovation},
	volume = {1},
	url = {https://doi.org/10.1063/1.4812323},
	doi = {10.1063/1.4812323},
	number = {1},
	journal = {APL Materials},
	author = {Jain, Anubhav and Ong, Shyue Ping and Hautier, Geoffroy and Chen, Wei and Richards, William Davidson and Dacek, Stephen and Cholia, Shreyas and Gunter, Dan and Skinner, David and Ceder, Gerbrand and Persson, Kristin A.},
	month = jul,
	year = {2013},
	pages = {011002},
}

@inproceedings{jiao_crystal_2023,
	title = {Crystal {Structure} {Prediction} by {Joint} {Equivariant} {Diffusion}},
	volume = {36},
	url = {https://proceedings.neurips.cc/paper_files/paper/2023/hash/38b787fc530d0b31825827e2cc306656-Abstract-Conference.html},
	doi = {10.52202/075280-0767},
	booktitle = {Advances in {Neural} {Information} {Processing} {Systems}},
	publisher = {Curran Associates, Inc.},
	author = {Jiao, Rui and Huang, Wenbing and Lin, Peijia and Han, Jiaqi and Chen, Pin and Lu, Yutong and Liu, Yang},
	year = {2023},
	pages = {17464--17497},
}

@inproceedings{jiao_space_2024,
	title = {Space {Group} {Constrained} {Crystal} {Generation}},
	booktitle = {The Twelfth International Conference on Learning Representations},
	url = {https://proceedings.iclr.cc/paper_files/paper/2024/hash/1ba1b2133ac3d3dda0ce5cd67dab8456-Abstract-Conference.html},
	journal = {International Conference on Learning Representations},
	author = {Jiao, Rui and Huang, Wenbing and Liu, Yu and Zhao, Deli and Liu, Yang},
	month = may,
	year = {2024},
	pages = {6836--6853},
}

@inproceedings{joshi_all-atom_2025,
	title = {All-atom {Diffusion} {Transformers}: {Unified} generative modelling of molecules and materials},
	url = {https://proceedings.mlr.press/v267/joshi25a.html},
	booktitle = {Proceedings of the 42nd {International} {Conference} on {Machine} {Learning}},
	publisher = {PMLR},
	volume = {267},
	author = {Joshi, Chaitanya K. and Fu, Xiang and Liao, Yi-Lun and Gharakhanyan, Vahe and Miller, Benjamin Kurt and Sriram, Anuroop and Ulissi, Zachary Ward},
	month = jul,
	year = {2025},
	pages = {28393--28417},
}

@article{karen_oxidation_2015,
	title = {Oxidation {State}, {A} {Long}-{Standing} {Issue}!},
	volume = {54},
	url = {https://onlinelibrary.wiley.com/doi/abs/10.1002/anie.201407561},
	doi = {10.1002/anie.201407561},
	number = {16},
	journal = {Angewandte Chemie International Edition},
	author = {Karen, Pavel},
	year = {2015},
	pages = {4716--4726},
}

@inproceedings{van_krieken_neurosymbolic_2025,
	title = {Neurosymbolic {Diffusion} {Models}},
	volume = {38},
	url = {https://papers.nips.cc/paper_files/paper/2025/hash/c60bd92a01804b7df0540ed7ca2f7c05-Abstract-Conference.html},
	doi = {10.52202/085713-4535},
	booktitle = {Advances in {Neural} {Information} {Processing} {Systems}},
	publisher = {Curran Associates, Inc.},
	author = {van Krieken, Emile and Minervini, Pasquale and Ponti, Edoardo Maria and Vergari, Antonio},
	year = {2025},
	pages = {135889--135931},
}

@inproceedings{levy_symmcd_2024,
	title = {{SymmCD}: {Symmetry}-{Preserving} {Crystal} {Generation} with {Diffusion} {Models}},
	url = {https://openreview.net/forum?id=xnssGv9rpW},
	booktitle = {The Thirteenth International Conference on Learning Representations},
	author = {Levy, Daniel and Panigrahi, Siba Smarak and Kaba, Sékou-Oumar and Zhu, Qiang and Lee, Kin Long Kelvin and Galkin, Mikhail and Miret, Santiago and Ravanbakhsh, Siamak},
	year = {2025},
}

@article{luo_crystalflow_2025,
	title = {{CrystalFlow}: a flow-based generative model for crystalline materials},
	volume = {16},
	url = {https://www.nature.com/articles/s41467-025-64364-4},
	doi = {10.1038/s41467-025-64364-4},
	number = {1},
	journal = {Nature Communications},
	publisher = {Nature Publishing Group},
	author = {Luo, Xiaoshan and Wang, Zhenyu and Wang, Qingchang and Shao, Xuechen and Lv, Jian and Wang, Lei and Wang, Yanchao and Ma, Yanming},
	month = oct,
	year = {2025},
	pages = {9267},
}

@misc{mastej_chemical_2026,
	title = {Chemical filters for ultra-high-throughput materials screening and generation},
	url = {https://arxiv.org/abs/2607.17910},
	doi = {10.48550/arXiv.2607.17910},
	publisher = {arXiv},
	author = {Mastej, Kinga O. and Detrattanawichai, Panyalak and Park, Hyunsoo and Onwuli, Anthony and Negishi, Masahiro and Walsh, Aron},
	month = jul,
	year = {2026},
	note = {arXiv:2607.17910 [cond-mat.mtrl-sci]},
}

@article{merchant_scaling_2023,
	title = {Scaling deep learning for materials discovery},
	volume = {624},
	url = {https://www.nature.com/articles/s41586-023-06735-9},
	doi = {10.1038/s41586-023-06735-9},
	number = {7990},
	journal = {Nature},
	publisher = {Nature Publishing Group},
	author = {Merchant, Amil and Batzner, Simon and Schoenholz, Samuel S. and Aykol, Muratahan and Cheon, Gowoon and Cubuk, Ekin Dogus},
	month = dec,
	year = {2023},
	pages = {80--85},
}

@inproceedings{miller_flowmm_2024,
	title = {{FlowMM}: {Generating} {Materials} with {Riemannian} {Flow} {Matching}},
	url = {https://proceedings.mlr.press/v235/miller24a.html},
	booktitle = {Proceedings of the 41st {International} {Conference} on {Machine} {Learning}},
	publisher = {PMLR},
	volume = {235},
	author = {Miller, Benjamin Kurt and Chen, Ricky T. Q. and Sriram, Anuroop and Wood, Brandon M.},
	month = jul,
	year = {2024},
	pages = {35664--35686},
}

@article{ong_python_2013,
	title = {Python {Materials} {Genomics} (pymatgen): {A} robust, open-source python library for materials analysis},
	volume = {68},
	url = {https://www.sciencedirect.com/science/article/pii/S0927025612006295},
	doi = {10.1016/j.commatsci.2012.10.028},
	journal = {Computational Materials Science},
	author = {Ong, Shyue Ping and Richards, William Davidson and Jain, Anubhav and Hautier, Geoffroy and Kocher, Michael and Cholia, Shreyas and Gunter, Dan and Chevrier, Vincent L. and Persson, Kristin A. and Ceder, Gerbrand},
	month = feb,
	year = {2013},
	pages = {314--319},
}

@article{park_guiding_2026,
	title = {Guiding generative models to uncover diverse and novel crystals via reinforcement learning},
	volume = {8},
	url = {https://www.nature.com/articles/s42256-026-01262-4},
	doi = {10.1038/s42256-026-01262-4},
	number = {7},
	journal = {Nature Machine Intelligence},
	publisher = {Nature Publishing Group},
	author = {Park, Hyunsoo and Walsh, Aron},
	month = jul,
	year = {2026},
	pages = {1087--1099},
}

@article{park_has_2024,
	title = {Has generative artificial intelligence solved inverse materials design?},
	volume = {7},
	url = {https://www.cell.com/matter/abstract/S2590-2385(24)00242-X},
	doi = {10.1016/j.matt.2024.05.017},
	number = {7},
	journal = {Matter},
	publisher = {Elsevier},
	author = {Park, Hyunsoo and Li, Zhenzhu and Walsh, Aron},
	month = jul,
	year = {2024},
	pages = {2355--2367},
}

@article{park_closing_2026,
	title = {Closing the synthesis gap in computational materials design},
	volume = {5},
	url = {https://www.nature.com/articles/s44160-026-01027-2},
	doi = {10.1038/s44160-026-01027-2},
	number = {5},
	journal = {Nature Synthesis},
	publisher = {Nature Publishing Group},
	author = {Park, Hyunsoo and Mastej, Kinga O. and Detrattanawichai, Panyalak and Nduma, Ryan and Walsh, Aron},
	month = may,
	year = {2026},
	pages = {651--659},
}

@article{rabiner_tutorial_1989,
	title = {A tutorial on hidden {Markov} models and selected applications in speech recognition},
	volume = {77},
	url = {https://ieeexplore.ieee.org/document/18626},
	doi = {10.1109/5.18626},
	number = {2},
	journal = {Proceedings of the IEEE},
	author = {Rabiner, L.R.},
	month = feb,
	year = {1989},
	pages = {257--286},
}

@inproceedings{sahoo_simple_2024,
	title = {Simple and {Effective} {Masked} {Diffusion} {Language} {Models}},
	volume = {37},
	url = {https://proceedings.neurips.cc/paper_files/paper/2024/hash/eb0b13cc515724ab8015bc978fdde0ad-Abstract-Conference.html},
	doi = {10.52202/079017-4135},
	booktitle = {Advances in {Neural} {Information} {Processing} {Systems}},
	publisher = {Curran Associates, Inc.},
	author = {Sahoo, Subham Sekhar and Arriola, Marianne and Schiff, Yair and Gokaslan, Aaron and Marroquin, Edgar and Chiu, Justin T. and Rush, Alexander and Kuleshov, Volodymyr},
	year = {2024},
	pages = {130136--130184},
}

@inproceedings{suresh_dingo_2025,
	title = {{DINGO}: {Constrained} {Inference} for {Diffusion} {LLMs}},
	volume = {38},
	url = {https://proceedings.neurips.cc/paper_files/paper/2025/hash/eb17a2030d1bd4a1bd29531bcd626705-Abstract-Conference.html},
	doi = {10.52202/085713-5362},
	booktitle = {Advances in {Neural} {Information} {Processing} {Systems}},
	publisher = {Curran Associates, Inc.},
	author = {Suresh, Tarun and Banerjee, Debangshu and Ugare, Shubham and Misailovic, Sasa and Singh, Gagandeep},
	year = {2025},
	pages = {160518--160551},
}

@misc{tomasi_primal-dual_2026,
	title = {Primal-{Dual} {Guided} {Decoding} for {Constrained} {Discrete} {Diffusion}},
	url = {https://arxiv.org/abs/2605.09749},
	doi = {10.48550/arXiv.2605.09749},
	publisher = {arXiv},
	author = {Tomasi, Federico and Moor, Dmitrii and Wang, Alice and Lalmas, Mounia},
	month = may,
	year = {2026},
	note = {arXiv:2605.09749 [cs.AI]},
}

@misc{veljkovic_crystalite_2026,
	title = {Crystalite: {A} {Lightweight} {Transformer} for {Efficient} {Crystal} {Modeling}},
	url = {https://arxiv.org/abs/2604.02270},
	doi = {10.48550/arXiv.2604.02270},
	publisher = {arXiv},
	author = {Veljković, Tin Hadži and Rosenthal, Joshua and Lončarić, Ivor and van de Meent, Jan-Willem},
	month = jul,
	year = {2026},
	note = {arXiv:2604.02270 [cs.LG]},
}

@article{walsh_oxidation_2018,
	title = {Oxidation states and ionicity},
	volume = {17},
	url = {https://www.nature.com/articles/s41563-018-0165-7},
	doi = {10.1038/s41563-018-0165-7},
	number = {11},
	journal = {Nature Materials},
	publisher = {Nature Publishing Group},
	author = {Walsh, Aron and Sokol, Alexey A. and Buckeridge, John and Scanlon, David O. and Catlow, C. Richard A.},
	month = nov,
	year = {2018},
	pages = {958--964},
}

@inproceedings{xie_crystal_2022,
	title = {Crystal Diffusion Variational Autoencoder for Periodic Material Generation},
	author = {Tian Xie and Xiang Fu and Octavian-Eugen Ganea and Regina Barzilay and Tommi S. Jaakkola},
	booktitle = {International Conference on Learning Representations},
	year = {2022},
	url = {https://openreview.net/forum?id=03RLpj-tc_},
}

@inproceedings{yan_complete_2024,
	title = {Complete and {Efficient} {Graph} {Transformers} for {Crystal} {Material} {Property} {Prediction}},
	url = {https://proceedings.iclr.cc/paper_files/paper/2024/hash/0ab51646ca369140c3c3ece011b66587-Abstract-Conference.html},
	booktitle = {International {Conference} on {Learning} {Representations}},
	author = {Yan, Keqiang and Fu, Cong and Qian, Xiaofeng and Qian, Xiaoning and Ji, Shuiwang},
	month = may,
	year = {2024},
	pages = {2564--2590},
}

@misc{yang_mattersim_2024,
	title = {{MatterSim}: {A} {Deep} {Learning} {Atomistic} {Model} {Across} {Elements}, {Temperatures} and {Pressures}},
	url = {https://arxiv.org/abs/2405.04967},
	doi = {10.48550/arXiv.2405.04967},
	publisher = {arXiv},
	author = {Yang, Han and Hu, Chenxi and Zhou, Yichi and Liu, Xixian and Shi, Yu and Li, Jielan and Li, Guanzhi and Chen, Zekun and Chen, Shuizhou and Zeni, Claudio and Horton, Matthew and Pinsler, Robert and Fowler, Andrew and Zügner, Daniel and Xie, Tian and Smith, Jake and Sun, Lixin and Wang, Qian and Kong, Lingyu and Liu, Chang and Hao, Hongxia and Lu, Ziheng},
	month = may,
	year = {2024},
	note = {arXiv:2405.04967 [cond-mat.mtrl-sci]},
}

@article{yao_machine_2023,
	title = {Machine learning for a sustainable energy future},
	volume = {8},
	url = {https://www.nature.com/articles/s41578-022-00490-5},
	doi = {10.1038/s41578-022-00490-5},
	number = {3},
	journal = {Nature Reviews Materials},
	publisher = {Nature Publishing Group},
	author = {Yao, Zhenpeng and Lum, Yanwei and Johnston, Andrew and Mejia-Mendoza, Luis Martin and Zhou, Xin and Wen, Yonggang and Aspuru-Guzik, Alán and Sargent, Edward H. and Seh, Zhi Wei},
	month = mar,
	year = {2023},
	pages = {202--215},
}

@article{zagorac_recent_2019,
	title = {Recent developments in the {Inorganic} {Crystal} {Structure} {Database}: theoretical crystal structure data and related features},
	volume = {52},
	url = {https://journals.iucr.org/j/issues/2019/05/00/in5024/},
	doi = {10.1107/S160057671900997X},
	number = {5},
	journal = {Journal of Applied Crystallography},
	publisher = {International Union of Crystallography},
	author = {Zagorac, D. and Müller, H. and Ruehl, S. and Zagorac, J. and Rehme, S.},
	month = oct,
	year = {2019},
	pages = {918--925},
}

@article{zeni_generative_2025,
	title = {A generative model for inorganic materials design},
	volume = {639},
	url = {https://www.nature.com/articles/s41586-025-08628-5},
	doi = {10.1038/s41586-025-08628-5},
	number = {8055},
	journal = {Nature},
	publisher = {Nature Publishing Group},
	author = {Zeni, Claudio and Pinsler, Robert and Zügner, Daniel and Fowler, Andrew and Horton, Matthew and Fu, Xiang and Wang, Zilong and Shysheya, Aliaksandra and Crabbé, Jonathan and Ueda, Shoko and Sordillo, Roberto and Sun, Lixin and Smith, Jake and Nguyen, Bichlien and Schulz, Hannes and Lewis, Sarah and Huang, Chin-Wei and Lu, Ziheng and Zhou, Yichi and Yang, Han and Hao, Hongxia and Li, Jielan and Yang, Chunlei and Li, Wenjie and Tomioka, Ryota and Xie, Tian},
	month = mar,
	year = {2025},
	pages = {624--632},
}

@article{yi_crystaldit_2026,
	title = {{CrystalDiT}: {Simple} {Diffusion} {Transformers} for {Crystal} {Generation}},
	volume = {40},
	issn = {2374-3468},
	shorttitle = {{CrystalDiT}},
	url = {https://ojs.aaai.org/index.php/AAAI/article/view/37121},
	doi = {10.1609/aaai.v40i2.37121},
	language = {en},
	number = {2},
	urldate = {2026-09-18},
	journal = {Proceedings of the AAAI Conference on Artificial Intelligence},
	author = {Yi, Xiaohan and Xu, Guikun and Zhang, Zhong and Liu, Liu and Bian, Yatao and Xiao, Xi and Zhao, Peilin},
	month = mar,
	year = {2026},
	pages = {1462--1470},
}

@article{park_generative_2026,
	title = {Generative modelling of inorganic materials with explicit electronic structure},
	volume = {17},
	copyright = {2026 The Author(s)},
	issn = {2041-1723},
	url = {https://www.nature.com/articles/s41467-026-73985-2},
	doi = {10.1038/s41467-026-73985-2},
	number = {1},
	urldate = {2026-09-18},
	journal = {Nature Communications},
	publisher = {Nature Publishing Group},
	author = {Park, Junkil and Choi, Junyoung and Jung, Yousung},
	month = jun,
	year = {2026},
	pages = {7070},
}

@misc{betala_lematgenbench_2026,
      title={{LeMat-GenBench}: A Unified Evaluation Framework for Crystal Generative Models}, 
      author={Siddharth Betala and Samuel P. Gleason and Ali Ramlaoui and Andy Xu and Georgia Channing and Daniel Levy and Clémentine Fourrier and Nikita Kazeev and Chaitanya K. Joshi and Sékou-Oumar Kaba and Félix Therrien and Alex Hernandez-Garcia and Rocío Mercado and N. M. Anoop Krishnan and Alexandre Duval},
      year={2025},
      eprint={2512.04562},
      archivePrefix={arXiv},
      primaryClass={cs.LG},
      url={https://arxiv.org/abs/2512.04562}, 
}

@inproceedings{kazeev_wyckoff_2025,
	title = {Wyckoff {Transformer}: {Generation} of {Symmetric} {Crystals}},
	url = {https://proceedings.mlr.press/v267/kazeev25a.html},
	booktitle = {Proceedings of the 42nd {International} {Conference} on {Machine} {Learning}},
	publisher = {PMLR},
	volume = {267},
	author = {Kazeev, Nikita and Nong, Wei and Romanov, Ignat and Zhu, Ruiming and Ustyuzhanin, Andrey E. and Yamazaki, Shuya and Hippalgaonkar, Kedar},
	month = jul,
	year = {2025},
	pages = {29495--29526},
}

@inproceedings{xu_plaid_2026,
	title = {{PLaID}++: {A} {Preference} {Aligned} {Language} {Model} for {Targeted} {Inorganic} {Materials} {Design}},
	url = {https://arxiv.org/abs/2509.07150},
	booktitle = {Proceedings of the 43rd {International} {Conference} on {Machine} {Learning}},
	publisher = {PMLR},
	volume = {306},
	author = {Xu, Andy and Desai, Rohan and Wang, Larry and Ritz, Ethan and Hope, Gabriel},
	month = jul,
	year = {2026},
}

@article{cao_space_2025,
	title = {Space group informed transformer for crystalline materials generation},
	volume = {70},
	url = {https://www.sciencedirect.com/science/article/pii/S2095927325009752},
	doi = {10.1016/j.scib.2025.09.035},
	number = {21},
	journal = {Science Bulletin},
	author = {Cao, Zhendong and Luo, Xiaoshan and Lv, Jian and Wang, Lei},
	month = nov,
	year = {2025},
	pages = {3522--3533},
}

@inproceedings{martirossyan_framework_2026,
	title = {Framework-Constrained Materials Generation},
	url = {https://openreview.net/forum?id=Yk4FcrVcU2},
	booktitle = {{AI} for Accelerated Materials Design Workshop at {ICLR} 2026},
	author = {Martirossyan, Maya M. and Pan, Hillary and Höllmer, Philipp and Martiniani, Stefano},
	year = {2026},
}

@misc{okhotin_miad_2025,
	title = {{MiAD}: Mirage Atom Diffusion for De Novo Crystal Generation},
	url = {https://arxiv.org/abs/2511.14426},
	doi = {10.48550/arXiv.2511.14426},
	publisher = {arXiv},
	author = {Okhotin, Andrey and Nakhodnov, Maksim and Kazeev, Nikita and Lazarev, Mikhail and Ustyuzhanin, Andrey E. and Vetrov, Dmitry},
	month = nov,
	year = {2025},
	note = {arXiv:2511.14426 [cs.LG]},
}

@misc{seong_multimodal_2026,
	title = {Multimodal Crystal Flow: Any-to-Any Modality Generation for Unified Crystal Modeling},
	url = {https://arxiv.org/abs/2602.20210},
	doi = {10.48550/arXiv.2602.20210},
	publisher = {arXiv},
	author = {Seong, Kiyoung and Ahn, Sungsoo and Han, Sehui and Park, Changyoung},
	month = feb,
	year = {2026},
	note = {arXiv:2602.20210 [cs.LG]},
}

@article{ahlawat_family_2026,
	title = {A family of large language models for materials research with insights into model adaptability in continued pretraining},
	volume = {8},
	url = {https://www.nature.com/articles/s42256-026-01199-8},
	doi = {10.1038/s42256-026-01199-8},
	journal = {Nature Machine Intelligence},
	author = {Ahlawat, Dhruv and Mishra, Vaibhav and Singh, Somaditya and Zaki, Mohd and Bihani, Vaibhav and Grover, Hargun Singh and Mishra, Biswajit and Miret, Santiago and {Mausam} and Krishnan, N. M. Anoop},
	year = {2026},
	pages = {435--448},
}

@misc{bone_discovery_2025,
	title = {Discovery and recovery of crystalline materials with property-conditioned transformers},
	url = {https://arxiv.org/abs/2511.21299},
	doi = {10.48550/arXiv.2511.21299},
	publisher = {arXiv},
	author = {Bone, Cyprien and Walker, Matthew and Leng, Kuangdai and Antunes, Luis M. and Grau-Crespo, Ricardo and Aligayev, Amil and Dominguez, Javier and Butler, Keith T.},
	month = nov,
	year = {2025},
	note = {arXiv:2511.21299 [cond-mat.mtrl-sci]},
}

@article{park_exploration_2025,
	title = {Exploration of crystal chemical space using text-guided generative artificial intelligence},
	volume = {16},
	url = {https://www.nature.com/articles/s41467-025-59636-y},
	doi = {10.1038/s41467-025-59636-y},
	number = {1},
	journal = {Nature Communications},
	author = {Park, Hyunsoo and Onwuli, Anthony and Walsh, Aron},
	year = {2025},
	pages = {4379},
}

\newpage

\appendix

\section{Structured Output Layer}
\label{app:structured}

\noindent\textbf{Proof of Proposition~\ref{prop:map}.}\quad
We show by induction on $i$ that $V_i(c)$ is the maximum unnormalised weight of any assignment to the first $i$ sites with cumulative charge $c$. The claim holds for $i=0$ since $V_0(c)=\ind[c=0]$. Assuming it holds for $i-1$, any assignment to the first $i$ sites with cumulative charge $c$ can be decomposed into an assignment to the first $i-1$ sites with cumulative charge $c-z(s_{0,i})$ and a choice of $s_{0,i}$. Hence,
\begin{equation}
    V_i(c)=\max_{s_{0,i}\in\mathcal{S}}p_{\theta,i}(s_{0,i}\mid\boldsymbol{M}_t,t)\,V_{i-1}\!\left(c-z(s_{0,i})\right),
\label{eq:map-proof}
\end{equation}
which is therefore the maximum unnormalised weight among all such
assignments.

It follows that $V_N(0)$ is the maximum unnormalised weight over all charge-neutral assignments. Since the normalising constant $Z_\theta(\boldsymbol{M}_t,t)$ does not depend on the assignment, maximising the unnormalised weight is equivalent to maximising $\widetilde p_{\theta}(\boldsymbol{S}_0\mid\boldsymbol{M}_t,t)$ over its support. Backtracking through the stored maximisers therefore recovers the exact MAP charge-neutral assignment. \hfill$\square$

\noindent\textbf{Proof of Proposition~\ref{prop:sampling}.}\quad Let $\boldsymbol{S}_0=(s_{0,1},\ldots,s_{0,N})$ be a charge-neutral species assignment, with cumulative charges $c_i=\sum_{j=1}^i z(s_{0,j})$ and $c_0=0$. Under the sequential sampling rule in Eq.~\eqref{eq:sampler}, its probability is
\begin{align}
\prod_{i=1}^N
\frac{
p_{\theta,i}(s_{0,i}\mid\boldsymbol{M}_t,t)\,
B_i(c_i)
}{
B_{i-1}(c_{i-1})
}
&=
\frac{
\prod_{i=1}^N p_{\theta,i}(s_{0,i}\mid\boldsymbol{M}_t,t)\,
B_N(c_N)
}{
B_0(c_0)
}
\label{eq:sampling-telescope}
\\
&=
\frac{
\prod_{i=1}^N p_{\theta,i}(s_{0,i}\mid\boldsymbol{M}_t,t)
}{
Z_\theta(\boldsymbol{M}_t,t)
},
\label{eq:sampling-proof}
\end{align}
where the backward-weight factors telescope. Since $\boldsymbol{S}_0$ is charge neutral, $c_N=0$, so $B_N(c_N)=B_N(0)=1$, while $c_0=0$ and $B_0(0)=Z_\theta(\boldsymbol{M}_t,t)$.

Thus the sequential procedure assigns probability $\widetilde p_\theta(\boldsymbol{S}_0\mid\boldsymbol{M}_t,t)$ to every charge-neutral species assignment. Any non-neutral assignment has probability zero because $B_N(c_N)=0$ whenever $c_N\neq 0$. Therefore the procedure samples exactly from the structured distribution. \hfill$\square$

\section{Oxidation-State Prediction Details}
\label{app:os-pred}

Oxidation-state prediction assigns an oxidation state to each atomic site of a crystal with known composition and structure. Given a crystal $\boldsymbol{M}=(\boldsymbol{A},\boldsymbol{F},\boldsymbol{L})$ with element identities $\boldsymbol{A}=(a_1,\ldots,a_N)\in\mathcal{A}^N$, we predict an oxidation-state assignment $\boldsymbol{Z}=(z_1,\ldots,z_N)\in\mathcal{Z}^N$, where $\mathcal{Z}$ is a vocabulary of integer oxidation states. A predictor defines a factorised distribution
\begin{equation}
p_{\phi}(\boldsymbol{Z}\mid\boldsymbol{M})
=
\prod_{i=1}^N
p_{\phi,i}(z_i\mid\boldsymbol{M}),
\end{equation}
where each site is predicted over the full oxidation-state vocabulary $\mathcal{Z}$. The predictor is trained using the categorical cross-entropy loss,
\begin{equation}
\mathcal{L}_{\mathrm{CE}}(\phi)
=
-\frac1N\sum_{i=1}^N
\log p_{\phi,i}(z_i^\star\mid\boldsymbol{M}),
\end{equation}
where $z_i^\star$ denotes the ground-truth oxidation state at site $i$. At inference, unconstrained predictions are obtained independently at each site by argmax decoding,
\begin{equation}
\hat z_i
=
\arg\max_{z\in\mathcal{Z}}
p_{\phi,i}(z\mid\boldsymbol{M}).
\end{equation}
Argmax decoding does not enforce that the predicted oxidation state is chemically admissible for the corresponding element or that the complete assignment is charge neutral.

\subsection{Vocabulary and Dataset Construction}
\label{app:os-vocab}

We define the oxidation-state vocabulary using the \texttt{SMACT}~v4.0.0 ICSD-24 oxidation-state filter with \texttt{consensus=6}, \texttt{commonality=low}, and \texttt{include\_zero=False} \citep{davies_smact_2019}. We use a stricter consensus threshold than the \texttt{SMACT} default of \texttt{consensus=3} to exclude extremely unusual oxidation states from the vocabulary. The admissible element--oxidation-state pairs cover 90 elements with oxidation states drawn from
\begin{equation}
\mathcal{Z}
=
\{-5,-4,-3,-2,-1,+1,+2,+3,+4,+5,+6,+7,+8\},
\label{eq:os-vocab}
\end{equation}
with an admissible subset $\mathcal{Z}(a)\subseteq\mathcal{Z}$ for each element $a$.

We construct an oxidation-state-labelled crystal structure dataset from the Inorganic Crystal Structure Database (ICSD) \citep{zagorac_recent_2019}. We apply the following curation procedure:
\begin{enumerate}
    \item Retain only structures for which every atomic site has an integer oxidation-state label, the complete assignment is charge neutral, and each site label belongs to the corresponding element-specific admissible set $\mathcal{Z}(a)$.
    \item Remove duplicate structures and structures with more than 200 atoms.
    \item Convert all retained structures to primitive Niggli-reduced cells.
    \item Define the set of elements with known mixed-valence chemistry, $\mathcal{A}_{\mathrm{mv}}$, using \texttt{SMACT}'s \texttt{MIXED\_VALENCE\_ELEMENTS} set, and exclude structures in which any element outside this set is assigned more than one oxidation state.
\end{enumerate}
We then group structures by reduced composition and construct composition-disjoint training, validation, and test splits such that no reduced composition occurs in more than one split. The resulting dataset contains 22,879 training, 2,831 validation, and 2,782 test structures.

\subsection{Model Architectures}
\label{app:os-models}

\noindent\textbf{BERTOS.}\quad BERTOS is a composition-only oxidation-state predictor based on a BERT transformer \citep{fu_composition_2023}. A composition is expanded into an electronegativity-ordered sequence of element tokens, which is processed by the transformer to produce contextual representations for each site. A shared classifier predicts an oxidation state for each token. The encoder is first pre-trained using masked-language modelling over element tokens and is then fine-tuned for oxidation-state prediction.

\noindent\textbf{CrystaliteOS.}\quad Our predictor, which we refer to as CrystaliteOS, is adapted from the Crystalite transformer \citep{veljkovic_crystalite_2026}. We retain the geometry-enhanced transformer backbone, remove the diffusion-time conditioning and prediction heads, and replace them with a linear per-site classifier over the oxidation-state vocabulary $\mathcal{Z}$. Unlike BERTOS, the predictor takes the complete crystal structure $\boldsymbol{M}=(\boldsymbol{A},\boldsymbol{F},\boldsymbol{L})$ as input.

Architecture and training settings are given in Table~\ref{tab:os-settings}. For each decoding method, we select the checkpoint with the highest validation crystal accuracy under that same decoding method. We therefore report argmax and Viterbi results using their respective best validation checkpoints.

\subsection{Structured Inference}
\label{app:grouped-dp}

For CrystaliteOS, we apply the structured Viterbi decoding of Section~\ref{sec:inference} to oxidation-state prediction. Given the factorised predictive distribution
\begin{equation}
p_{\phi}(\boldsymbol{Z}\mid\boldsymbol{M})
=
\prod_{i=1}^N p_{\phi,i}(z_i\mid\boldsymbol{M}),
\end{equation}
we define the structured distribution
\begin{equation}
\widetilde{p}_{\phi}(\boldsymbol{Z}\mid\boldsymbol{M})
\propto
\ind\!\left[\sum_{i=1}^N z_i=0\right]
\prod_{i=1}^N
\ind\!\left[z_i\in\mathcal{Z}(a_i)\right]
p_{\phi,i}(z_i\mid\boldsymbol{M}),
\label{eq:os-structured}
\end{equation}
whose support is restricted to charge-neutral assignments that use an admissible oxidation state for each element. The same dynamic program described in Section~\ref{sec:inference} can be used for exact inference under this distribution.

For CrystaliteOS, we additionally restrict mixed valence to elements in $\mathcal{A}_{\mathrm{mv}}$. For each element $a\notin\mathcal{A}_{\mathrm{mv}}$, all sites occupied by $a$ are constrained to have the same oxidation state. This restriction can be incorporated exactly into the dynamic program by grouping sites of the same element. Let
\begin{equation}
I_a=\{i:a_i=a\}
\label{eq:element-sites}
\end{equation}
denote the set of sites occupied by element $a$, and let $n_a=|I_a|$ denote its multiplicity. For an admissible oxidation state $z\in\mathcal{Z}(a)$, assigning oxidation state $z$ to every site in $I_a$ has unnormalised weight
\begin{equation}
w_a(z)
=
\prod_{i\in I_a}
p_{\phi,i}(z\mid\boldsymbol{M}),
\label{eq:group-weight}
\end{equation}
and contributes $n_a z$ to the cumulative charge. For each $a\notin\mathcal{A}_{\mathrm{mv}}$, the $n_a$ site-level transitions can therefore be replaced by a single transition with weight $w_a(z)$ and charge increment $n_a z$. This preserves the unnormalised weight of every assignment satisfying the single-valence constraint, so the forward--backward and Viterbi recursions apply unchanged over the restricted feasible set. Elements in $\mathcal{A}_{\mathrm{mv}}$ are not grouped and instead retain one dynamic-programming step per site, allowing different sites of the same element to adopt different oxidation states. We use Viterbi decoding to return the MAP assignment satisfying element-wise admissibility, charge neutrality, and the mixed-valence constraint.

\subsection{MP-20 oxidation-state labelling}
\label{app:mp20-labelling}

We use CrystaliteOS with charge-neutral Viterbi decoding to annotate MP-20 and apply the following curation procedure:
\begin{enumerate}
    \item Convert all structures to Niggli-reduced cells.
    \item Remove structures containing elements outside the oxidation-state vocabulary.
    \item Assign oxidation state zero to every site for structures in which all elements are classified as metals by \texttt{pymatgen}, following the conventional treatment of metallic alloys.
    \item Remove non-metallic elemental structures. Oxidation state zero is typically only assigned to non-metals in elemental structures, which constitute 0.2\% of the dataset; excluding them avoids expanding the species vocabulary to accommodate these cases.
    \item Assign oxidation states to all remaining structures using CrystaliteOS.
    \item Remove structures with no feasible charge-neutral assignment, or with a chemically implausible predicted assignment in which the same element adopts more than two distinct oxidation states or both positive and negative oxidation states.
\end{enumerate}
Table~\ref{tab:mp20-filter} summarises the filtering pipeline. Starting from 45,229 MP-20 structures, the final MP-20-OS dataset contains 42,576 structures, comprising 25,536 training, 8,495 validation, and 8,545 test structures.

\begin{table}[ht]
\centering
\small
\caption{MP-20-OS filtering. Rows under \textit{removed} report structures excluded at each filtering stage.}
\vspace{0.5\baselineskip}
\label{tab:mp20-filter}
\begin{tabular}{lrrrr}
\toprule
Stage & Train & Val. & Test & Total \\
\midrule
MP-20                              & 27,136 & 9,047 & 9,046 & 45,229 \\
\midrule
\multicolumn{5}{l}{\textit{Removed}} \\
Outside vocabulary                 &    289 &    90 &    83 &    462 \\
Elemental non-metal                &     55 &    20 &    10 &     85 \\
No feasible charge-neutral assignment
                                   &  1,003 &   359 &   327 &  1,689 \\
Chemically implausible assignment  &    253 &    83 &    81 &    417 \\
\midrule
Alloys                             &  7,926 & 2,691 & 2,704 & 13,321 \\
Labelled non-alloys                & 17,610 & 5,804 & 5,841 & 29,255 \\
\midrule
Total                              & 25,536 & 8,495 & 8,545 & 42,576 \\
\bottomrule
\end{tabular}
\end{table}

\begin{table}[t]
\centering
\small
\caption{Architecture and training settings for BERTOS and CrystaliteOS.}
\vspace{0.5\baselineskip}
\label{tab:os-settings}

\begin{tabularx}{\linewidth}{@{}l
    >{\centering\arraybackslash}X
    >{\centering\arraybackslash}X@{}}

\multicolumn{3}{c}{\textbf{(a)} Architecture} \\[4pt]
\toprule
Setting & \textbf{BERTOS} & \textbf{CrystaliteOS} \\
\midrule
Hidden dimension
    & 120 & 128 \\
Transformer layers
    & 12 & 12 \\
Attention heads
    & 4 & 4 \\
MLP hidden dimension
    & 512 & 256 \\
Dropout
    & 0.1 & 0.1 \\
Attention dropout
    & 0.1 & 0.1 \\
Coordinate embedding
    & -- & RFF, 32 freqs. \\
GEM distance bias
    & -- & Enabled \\
GEM edge-aware bias
    & -- & Enabled \\
Edge-bias freqs.\ / hidden / RBF / max
    & -- & 12 / 256 / 32 / 4 \\
GEM sharing
    & -- & Enabled \\
PBC search radius $R$
    & -- & 1 \\
\bottomrule

\addlinespace[5pt]
\multicolumn{3}{c}{\textbf{(b)} Training} \\[4pt]
\toprule
Setting & \textbf{BERTOS} & \textbf{CrystaliteOS} \\
\midrule
Optimiser
    & AdamW & AdamW \\
Learning rate
    & $10^{-3}$ & $10^{-3}$ \\
Weight decay
    & 0 & 0 \\
Batch size
    & 256 & 64 \\
Training epochs
    & 150 & 150 \\
LR schedule
    & Linear decay & Cosine decay \\
Warmup
    & 5\% of steps & 1000 steps \\
Gradient clipping
    & 1.0 & 1.0 \\
Precision
    & FP32 & FP32 \\
Checkpoint selection
    & Validation crystal accuracy
    & Validation crystal accuracy \\
MLM epochs
    & 5 epochs & -- \\
MLM mask probability
    & 0.15 & -- \\
MLM learning rate
    & $10^{-4}$ & -- \\
MLM LR schedule
    & Constant & -- \\
MLM weight decay
    & 0.01 & -- \\
\bottomrule

\end{tabularx}
\end{table}

\section{OxiGen Details}
\label{app:oxigen}

OxiGen adapts the MatterGen crystal diffusion model \citep{zeni_generative_2025}. A crystal $\boldsymbol{M}=(\boldsymbol{S},\boldsymbol{F},\boldsymbol{L})$ is generated by jointly denoising its categorical species assignment $\boldsymbol{S}$, fractional coordinates $\boldsymbol{F}$, and lattice matrix $\boldsymbol{L}$. The forward diffusion process factorises across the three components,
\begin{equation}
q(\boldsymbol{M}_t\mid\boldsymbol{M}_r)=q(\boldsymbol{S}_t\mid\boldsymbol{S}_r)\,q(\boldsymbol{F}_t\mid\boldsymbol{F}_r)\,q(\boldsymbol{L}_t\mid\boldsymbol{L}_r)\qquad 0\leq r<t\leq 1.
\end{equation}
We retain MatterGen's coordinate diffusion, lattice diffusion and network architecture. We briefly summarise these components below and describe our modified species diffusion in Appendix~\ref{app:diffusion}.

\noindent\textbf{Coordinate diffusion.}\quad Coordinate diffusion is performed on fractional coordinates, which naturally live on the flat torus $\mathbb{T}^3=\mathbb{R}^3/\mathbb{Z}^3$. The diffusion process uses variance-exploding diffusion with wrapped-normal noise. For a fractional coordinate $\boldsymbol{f}$, the noising process is defined as
\begin{equation}
q(\boldsymbol{f}_t\mid \boldsymbol{f}_0)
=
\mathcal{N}_{\mathrm W}
\!\left(
\boldsymbol{f}_t;\boldsymbol{f}_0,
\sigma_t^2(N)\boldsymbol{I}
\right),
\end{equation}
where $\mathcal{N}_{\mathrm W}$ denotes a Gaussian distribution wrapped onto $[0,1)^3$. As $\sigma_t$ increases, this distribution approaches the uniform distribution on the unit torus. To approximately remove the dependence of the corresponding Cartesian noise scale on unit-cell size, the fractional-coordinate standard deviation is rescaled according to
\begin{equation}
\sigma_t(N)=\frac{\sigma_t}{N^{1/3}},
\end{equation}
where $N$ is the number of atomic sites in the unit cell. Since Cartesian and fractional coordinates satisfy $\boldsymbol{X}=\boldsymbol{L}\boldsymbol{F}$, the network predicts coordinate scores in Cartesian space and then transforms these to fractional-coordinate scores.

\noindent\textbf{Lattice diffusion.}\quad The arbitrary global rotation of the unit cell is removed using the polar decomposition
\begin{equation}
\boldsymbol{L}=\boldsymbol{U}\tilde{\boldsymbol{L}},
\end{equation}
where $\boldsymbol{U}$ is orthogonal and $\tilde{\boldsymbol{L}}$ is symmetric positive definite. Diffusion is then performed on $\tilde{\boldsymbol{L}}$ and restricted to the space of symmetric matrices. A variance-preserving diffusion process is used to avoid unrealistically large noisy unit cells. The noising process is defined as
\begin{equation}
q(\tilde{\boldsymbol{L}}_t\mid\tilde{\boldsymbol{L}}_0)
=
\mathcal{N}\!\left(
\sqrt{\bar{\alpha}_t}\tilde{\boldsymbol{L}}_0
+
(1-\sqrt{\bar{\alpha}_t})\mu(N)\boldsymbol{I},
\,
(1-\bar{\alpha}_t)\sigma_L^2(N)\boldsymbol{I}
\right),
\end{equation}
where $\bar{\alpha}_t$ is the variance-preserving signal schedule. The prior is centred on an approximately cubic lattice, and $\mu(N)$ and $\sigma_L(N)$ are scaled with the number of atoms so that the typical atomic density and signal-to-noise ratio remain approximately independent of system size.

\noindent\textbf{Network architecture.}\quad The reverse process is parameterised by an SE(3)-equivariant GemNet-dT message-passing network \citep{gasteiger_gemnet_2021}, which processes the complete noisy crystal $\boldsymbol{M}_t$ together with the diffusion time. The network jointly predicts Cartesian coordinate scores, symmetric lattice scores, and logits for the clean species assignments. Thus, although the three forward corruption processes factorise, their reverse predictions are coupled through a single network conditioned on the full noisy crystal.

\subsection{Species Diffusion}
\label{app:diffusion}

MatterGen uses an absorbing-state D3PM \citep{austin_structured_2021} discrete diffusion process over atomic identities $\boldsymbol{A}\in\mathcal{A}^N$. We instead characterise the categorical state at each site as a species $s=(a,z)$ consisting of an element $a$ and oxidation state $z$, and replace the discrete-time D3PM with the continuous-time absorbing-state diffusion process of MDLM \citep{sahoo_simple_2024}. 

\noindent\textbf{Species vocabulary.}\quad We use the species vocabulary defined in Appendix~\ref{app:os-vocab} and additionally include $z=0$ for elements classified as metals by \texttt{pymatgen}. The resulting vocabulary contains 428 species over 90 elements and a mask token, giving a total model vocabulary of 429 states. MatterGen initialises each atomic node with an element embedding $e_{\mathrm{element}}$; OxiGen replaces this with
\begin{equation}
e
=
e_{\mathrm{element}}
+
\omega_{\mathrm{os}} e_{\mathrm{os}},
\label{eq:os-embedding}
\end{equation}
where $e_{\mathrm{os}}$ is a learned oxidation-state embedding of the same dimension and $\omega_{\mathrm{os}}$ is a learned global scalar initialised at one. The categorical output remains a single linear head, now defined over the joint element--oxidation-state species vocabulary.

\noindent\textbf{Forward diffusion.}\quad Let $m$ denote the absorbing mask token and let $\alpha_t\in[0,1]$ denote the probability that a site remains unmasked at time $t$, with $\alpha_t$ monotonically decreasing from $\alpha_0=1$ to $\alpha_1=0$ in the idealised continuous-time process. Given a clean species assignment $\boldsymbol{S}_0=(s_{0,1},\ldots,s_{0,N})$, sites are masked independently according to
\begin{equation}
q(\boldsymbol{S}_t\mid\boldsymbol{S}_0)
=
\prod_{i=1}^N
\left[
\alpha_t\,\ind[s_{t,i}=s_{0,i}]
+
(1-\alpha_t)\,\ind[s_{t,i}=m]
\right].
\label{eq:mdlm-forward}
\end{equation}
For $0\leq r<t\leq 1$, the corresponding single-site posterior is
\begin{equation}
q(s_{r,i}\mid s_{t,i},s_{0,i})
=
\begin{cases}
\ind[s_{r,i}=s_{t,i}],
& s_{t,i}\neq m, \\[4pt]
\displaystyle
\frac{1-\alpha_r}{1-\alpha_t}\ind[s_{r,i}=m]
+
\frac{\alpha_r-\alpha_t}{1-\alpha_t}\ind[s_{r,i}=s_{0,i}],
& s_{t,i}=m.
\end{cases}
\label{eq:mdlm-posterior}
\end{equation}

\noindent\textbf{Denoising objective.}\quad The denoiser predicts $p_{\theta,i}(s_{0,i}\mid\boldsymbol{M}_t,t)$ from the complete noisy crystal $\boldsymbol{M}_t=(\boldsymbol{S}_t,\boldsymbol{F}_t,\boldsymbol{L}_t)$. Following \citet{sahoo_simple_2024}, the denoiser assigns zero probability to the mask token and deterministically copies the species at sites that are already unmasked. Consequently, only masked sites contribute to the denoising objective. For the factorised clean-state prediction
\begin{equation}
p_\theta(\boldsymbol{S}_0\mid\boldsymbol{M}_t,t)
=
\prod_{i=1}^N p_{\theta,i}(s_{0,i}\mid\boldsymbol{M}_t,t),
\end{equation}
the continuous-time MDLM NELBO for the species component is
\begin{equation}
\mathcal{L}_{\mathrm{MDLM}}
=
\mathbb{E}_{\boldsymbol{M}_0}
\int_0^1
\mathbb{E}_{\boldsymbol{M}_t\mid\boldsymbol{M}_0}
\left[
\lambda(t)
\sum_{i=1}^N
\ind[s_{t,i}=m]\,
\left(
-\log p_{\theta,i}(s_{0,i}\mid\boldsymbol{M}_t,t)
\right)
\right]
\,dt,
\label{eq:mdlm}
\end{equation}
where
\begin{equation}
\lambda(t)=-\frac{\alpha_t'}{1-\alpha_t}.
\label{eq:mdlm-weight}
\end{equation}
In implementation, we use the numerically stabilised schedule $\alpha_t=1-(1-\epsilon)t$ with $\epsilon=10^{-3}$, which approximates the fully masked terminal condition $\alpha_1=0$ of the idealised continuous-time process. 

\noindent\textbf{Structured training objective.}\quad In the ablation study, we additionally train OxiGen using the charge-neutral clean-state distribution $\widetilde p_\theta(\boldsymbol{S}_0\mid\boldsymbol{M}_t,t)$ defined in Eq.~\eqref{eq:structured}. A general derivation of the continuous-time masked-diffusion NELBO for arbitrary non-factorised clean-state distributions is given by \citet{van_krieken_neurosymbolic_2025}. Their Theorem~C.1 shows that, under zero masking probability, carry-over unmasking, and a proper fully masked terminal prior, the continuous-time objective depends only on the one-site marginals of the clean-state distribution.

Consider a reverse interval $[r,t]$ of width $\Delta t=t-r$. For a site that is masked at time $t$, Eq.~\eqref{eq:mdlm-posterior} gives the probability of revealing its clean species during this interval as
\begin{equation}
\frac{\alpha_r-\alpha_t}{1-\alpha_t}
=
-\frac{\alpha_t'}{1-\alpha_t}\Delta t
+
O(\Delta t^2)
=
\lambda(t)\Delta t+O(\Delta t^2).
\label{eq:unmask-rate}
\end{equation}
Thus, revealing one site has probability $O(\Delta t)$, whereas revealing two or more sites in the same interval has probability $O(\Delta t^2)$ or smaller. Summed over $O(\Delta t^{-1})$ reverse intervals, the contribution of multi-site reveal events vanishes as $\Delta t\rightarrow 0$. Consequently, only single-site reveal events contribute in the continuous-time limit, and the NELBO depends on the structured distribution only through its one-site marginals $\widetilde p_{\theta,i}(s_{0,i}\mid\boldsymbol{M}_t,t)$.

Replacing the factorised clean-state probabilities in Eq.~\eqref{eq:mdlm} by these exact structured marginals gives
\begin{equation}
\mathcal{L}_{\mathrm{neutral}}
=
\mathbb{E}_{\boldsymbol{M}_0}
\int_0^1
\mathbb{E}_{\boldsymbol{M}_t\mid\boldsymbol{M}_0}
\left[
\lambda(t)
\sum_{i=1}^N
\ind[s_{t,i}=m]\,
\left(
-\log
\widetilde p_{\theta,i}(s_{0,i}\mid\boldsymbol{M}_t,t)
\right)
\right]
\,dt.
\label{eq:structured-loss}
\end{equation}
The marginals $\widetilde p_{\theta,i}$ are computed exactly using the forward--backward procedure in Section~\ref{sec:inference}. The structured objective therefore has the same site-wise form as the standard MDLM objective, but evaluates each masked-site loss under the charge-neutral clean-state distribution. We use this objective only for the ablation study; the default OxiGen model is trained with Eq.~\eqref{eq:mdlm} and applies the structured output layer only during sampling.

\subsection{Training and sampling}
\label{app:oxigen-train}

For the coordinates and lattice, we follow the training and sampling procedures of MatterGen. Species are generated using the charge-neutral sampling procedure described in Section~\ref{sec:inference}. Following MatterGen, generation excludes the noble gases, Tc, Pm, and all elements with atomic number $Z\geq84$. We retain MatterGen's network, coordinate-diffusion, and lattice-diffusion training and sampling hyperparameters without additional tuning. The architecture, optimisation, diffusion, and sampling settings are summarised in Table~\ref{tab:oxigen}.

\subsection{Computational Overhead}
\label{app:oxigen-compute}

Structured inference requires a single backward dynamic-programming pass to draw a charge-neutral sample at each diffusion step using Eq.~\eqref{eq:sampler}, while structured training requires one forward and one backward dynamic-programming pass to compute the marginals of the structured distribution using Eq.~\eqref{eq:marginal}. Each pass has time complexity $O(N^2|\mathcal{S}|(z_{\max}-z_{\min}))$, where $z_{\max}$ and $z_{\min}$ are the maximum and minimum oxidation states in the species vocabulary.

Table~\ref{tab:compute-cost} reports the overhead of the structured output layer relative to the same model without it. Training time is measured per epoch on MP-20-OS, and sampling time per batch of 128 structures over a full 1000-step reverse-diffusion trajectory, both on a single NVIDIA GH200 120GB with batch size 128. Training OxiGen for 900 epochs on four GH200 GPUs with batch size 128 per GPU (512 in total; Table~\ref{tab:oxigen}) takes approximately 3.5 hours.

\begin{table}[h]
\centering
\small
\setlength{\tabcolsep}{7.0pt}
\renewcommand{\arraystretch}{1.08}
\caption{Computational overhead of the structured output layer, reported as mean $\pm$ std over five epochs or five batches. Percentage changes are relative to the model without the structured output layer.}
\vspace{0.5\baselineskip}
\label{tab:compute-cost}
\begin{tabular}{@{}ccc@{}}
    \toprule
    \multirow{2}{*}{Structured Output Layer}
    & Training Time
    & Sampling Time \\
    & (s/epoch) $\downarrow$
    & (s/batch) $\downarrow$ \\
    \midrule
    \xmark
        & $78.797\msmall{\pm 0.101}$
        & $309.527\msmall{\pm 4.923}$ \\
    \cmark
        & $98.906\msmall{\pm 0.057}$
        & $329.716\msmall{\pm 4.809}$ \\
    \midrule
    Change (\%)
        & $+25.52$
        & $+6.52$ \\
    \bottomrule
\end{tabular}
\vspace{-1\baselineskip}
\end{table}

\begin{table}[t]
\centering
\small
\caption{Architecture, training, and sampling settings for OxiGen.}
\vspace{0.5\baselineskip}
\label{tab:oxigen}
\begin{tabular}{@{}ll@{}}
\toprule
Setting & Value \\
\midrule
Hidden / atom / edge dimension & 512 / 512 / 512 \\
Interaction blocks & 4 \\
Cutoff & $7.0$~\AA \\
Maximum neighbours & 50 \\
\midrule
Optimiser & Adam \\
Learning rate & $10^{-4}$ \\
LR schedule & ReduceLROnPlateau \\
LR factor / patience / minimum & 0.6 / 100 epochs / $10^{-6}$ \\
Batch size & 512 \\
Training epochs & 900 \\
Precision & FP32 \\
Gradient clipping & 0.5, value-based \\
Checkpoint selection & Last epoch \\
EMA & None \\
\midrule
Cell / position / species loss weights & $1.0$ / $0.1$ / $0.01$ \\
Mask schedule & $\alpha_t=1-(1-\epsilon)t$ \\
$\epsilon$ / training-time range & $10^{-3}$ / $t\sim\mathcal{U}(10^{-3},1)$ \\
Reverse steps & 1000 \\
Reverse-time grid & Uniform in $t$ \\
Position/cell corrector & 1 Langevin step \\
Species corrector & None \\
\bottomrule
\end{tabular}
\end{table}

\section{Band-Gap Conditional Generation}
\label{app:band-gap}

\noindent\textbf{Conditional generation.}\quad We fine-tune MatterGen and OxiGen for band-gap conditional generation following \citet{zeni_generative_2025}. Both models are fine-tuned on MP-20-OS using the DFT band-gap labels provided with MP-20. Band-gap values are standardised and encoded using a fixed sinusoidal embedding, which is incorporated before each GemNet-dT message-passing layer through an adapter consisting of a two-layer MLP and a zero-initialised linear mix-in layer. Fine-tuning updates both the pre-trained denoiser and the conditioning parameters. The models are trained with conditional dropout and sampled using classifier-free guidance, applied to both the continuous coordinate and lattice scores and the discrete atomic-identity or species predictions. Fine-tuning and generation settings are summarised in Table~\ref{tab:band-gap}.

\noindent\textbf{Band-gap prediction.}\quad We evaluate conditional generation using iComFormer \citep{yan_complete_2024}, an SE(3)-invariant crystal transformer for property prediction. Following \citet{yan_complete_2024}, we train the model on the Materials Project 2018.6.1 band-gap benchmark using the authors' provided data split, code, and hyperparameters. Our trained model achieves a test mean absolute error of $0.193~\mathrm{eV}$, matching the value reported by \citet{yan_complete_2024}. For generated crystals, band gaps are predicted from the MatterSim-relaxed structures rather than the unrelaxed model outputs.

\begin{table}[t]
\centering
\small
\caption{Band-gap conditional fine-tuning and sampling settings for OxiGen and MatterGen.}
\vspace{0.5\baselineskip}
\label{tab:band-gap}
\begin{tabular}{@{}ll@{}}
\toprule
Setting & Value \\
\midrule
Property embedding dimension & 512 \\
Null condition & Learned embedding \\
Fine-tuning mode & Full-model \\
Initialisation & Unconditional last epoch \\
Optimiser & Adam \\
Learning rate & $5\times10^{-6}$ \\
LR schedule & ReduceLROnPlateau \\
LR factor / patience / minimum & 0.6 / 100 epochs / $10^{-6}$ \\
Batch size & 128 \\
Epochs & 200 \\
Conditional dropout & 0.2 \\
Gradient clipping / precision & 0.5 / FP32 \\
\midrule
Guidance strength $\gamma$ & 2 \\
Reverse steps & 1000 \\
\bottomrule
\end{tabular}
\end{table}

\section{Evaluation Procedure}
\label{app:os-eval}

\subsection{Stability, uniqueness, and novelty}
\label{app:sun-eval}

We follow the released MatterGen evaluation procedure \citep{zeni_generative_2025}. Evaluation excludes structures containing noble gases, Tc, Pm, and all elements with atomic number $Z\geq84$ for all methods. Generated structures are first relaxed with MatterSim v1.0.0-1M \citep{yang_mattersim_2024}, and stability, uniqueness, novelty, and relaxation metrics are computed from the relaxed structures. Energy above the convex hull, $E_{\mathrm{hull}}$, is computed using \texttt{pymatgen} \citep{ong_python_2013} phase diagrams with the Alex-MP reference dataset and Materials Project 2020 compatibility corrections provided by \citet{zeni_generative_2025}. A structure is considered stable if $E_{\mathrm{hull}}\leq0.1~\mathrm{eV/atom}$. Uniqueness and novelty are computed via the \texttt{StructureMatcher} from \texttt{pymatgen} with $\texttt{ltol}=0.2$, $\texttt{stol}=0.3$, and $\texttt{angle\_tol}=5^\circ$. Uniqueness is evaluated within each generated set, and a structure is considered novel if it does not match any entry in the Alex-MP reference dataset. Oxidation-state labels are removed before structure matching. The relaxation RMSD is computed by matching each generated structure to its relaxed counterpart using \texttt{StructureMatcher} with $\texttt{ltol}=0.5$, $\texttt{stol}=0.5$, and $\texttt{angle\_tol}=10^\circ$. A structure is counted as S.U.N. if it is simultaneously stable, unique, and novel. Stability and S.U.N. rates are reported relative to the total number of generated structures. Uniqueness, novelty, mean $E_{\mathrm{hull}}$, and relaxation RMSD are computed over structures that were successfully relaxed.

\subsection{Oxidation-state metrics}
\label{app:os-metrics}

We evaluate compositional validity, the oxidation-state frequency score, and the oxidation-state distribution distance using \texttt{SMACT}~v4.0.0 \citep{davies_smact_2019}. We exclude elemental structures and structures for which all elements are classified as metals by \texttt{pymatgen}. Oxidation-state frequencies are taken from the ICSD-24 species-occurrence table with \texttt{consensus=3} and \texttt{include\_zero=False}. We compute compositional validity using \texttt{smact\_validity} with \texttt{consensus=3}, \texttt{commonality=low}, \texttt{include\_zero=False}, \texttt{mixed\_valence=False}, and \texttt{include\_alloys=False}.

For an element assignment $\boldsymbol{A}=(a_1,\ldots,a_N)$, let $\mathcal{N}(\boldsymbol{A})$ denote the set of valid charge-neutral oxidation-state assignments enumerated by \texttt{SMACT}. Because mixed valence is not permitted, each element is assigned a single oxidation state within any $\boldsymbol{S}\in\mathcal{N}(\boldsymbol{A})$. We write $\operatorname{uniq}(\boldsymbol{S})$ for the set of distinct species $(a,z)$ appearing in $\boldsymbol{S}$, and let $f_a(z)\in[0,1]$ denote the normalised ICSD-24 occurrence frequency of oxidation state $z$ for element $a$.

For each composition, we define the oxidation-state frequency score as the highest geometric mean species frequency over all valid assignments,
\begin{equation}
f_{\mathrm{OS}}(\boldsymbol{A})
=
100
\max_{\boldsymbol{S}\in\mathcal{N}(\boldsymbol{A})}
\left(
\prod_{(a,z)\in\operatorname{uniq}(\boldsymbol{S})}
f_a(z)
\right)^{1/|\operatorname{uniq}(\boldsymbol{S})|}.
\label{eq:os-frequency}
\end{equation}
Each distinct species contributes once, regardless of its multiplicity in the composition. We report the mean of $f_{\mathrm{OS}}(\boldsymbol{A})$ over structures with at least one valid assignment.

For Figure~\ref{fig:os-fidelity}, we additionally consider the minimum species frequency within each valid assignment. We define
\begin{equation}
f_{\mathrm{OS}}^{\mathrm{min}}(\boldsymbol{A})
=
\max_{\boldsymbol{S}\in\mathcal{N}(\boldsymbol{A})}
\min_{(a,z)\in\operatorname{uniq}(\boldsymbol{S})}
f_a(z),
\label{eq:min-frequency}
\end{equation}
which is the highest minimum species frequency achievable by any valid assignment for the composition. At a threshold $\tau\in[0,1]$, a composition is therefore counted as valid if it admits at least one assignment in which every constituent species has frequency at least $\tau$, equivalently if $f_{\mathrm{OS}}^{\mathrm{min}}(\boldsymbol{A})\geq\tau$. We report the fraction of evaluated structures satisfying this condition.

To compare oxidation-state distributions, we account for compositions that admit multiple valid assignments by assigning each $\boldsymbol{S}\in\mathcal{N}(\boldsymbol{A})$ a weight proportional to the product of its species frequencies,
\begin{equation}
q_{\boldsymbol{A}}(\boldsymbol{S})
=
\frac{
\displaystyle
\prod_{(a,z)\in\operatorname{uniq}(\boldsymbol{S})}
f_a(z)
}{
\displaystyle
\sum_{\boldsymbol{S}'\in\mathcal{N}(\boldsymbol{A})}
\prod_{(a,z)\in\operatorname{uniq}(\boldsymbol{S}')}
f_a(z)
}.
\label{eq:assignment-weight}
\end{equation}
These weights define a distribution over the valid assignments of each composition. For a dataset $\mathcal{D}$, the oxidation-state distribution for element $a$ is then obtained by marginalising over these assignments,
\begin{equation}
p_{\mathcal{D}}(z\mid a)
\propto
\sum_{\boldsymbol{A}\in\mathcal{D}}
\sum_{\boldsymbol{S}\in\mathcal{N}(\boldsymbol{A})}
q_{\boldsymbol{A}}(\boldsymbol{S})
\ind[(a,z)\in\operatorname{uniq}(\boldsymbol{S})],
\label{eq:os-distribution}
\end{equation}
with the distribution normalised over $z$ separately for each element.

Finally, we compare the generated and reference oxidation-state distributions using the Jensen--Shannon distance, macro-averaged over the set $\mathcal{A}'$ of elements present in both datasets,
\begin{equation}
d_{\mathrm{OS}}
=
\frac{1}{|\mathcal{A}'|}
\sum_{a\in\mathcal{A}'}
\sqrt{
\operatorname{JSD}
\left(
p_{\mathrm{gen}}(\cdot\mid a)
\,\Vert\,
p_{\mathrm{ref}}(\cdot\mid a)
\right)
}.
\label{eq:os-distance}
\end{equation}
We use the MP-20-OS test split as the reference distribution. Compositions with no valid assignment are excluded from the oxidation-state frequency and distribution metrics.

\section{Additional Results}

\subsection{De Novo Generation}

Table~\ref{tab:app-de-novo} reports the stability, uniqueness, novelty, and relaxation metrics for the \textit{de novo} generation experiments in Section~\ref{sec:de-novo}. The evaluated methods show the expected trade-off between stability and novelty. Crystalite achieves the highest stable fraction but the lowest novelty among the compared methods, while MatterGen achieves the highest novelty. OxiGen achieves a strong balance across these metrics, resulting in the highest S.U.N. rate, together with the lowest relaxation RMSD.

\noindent\textbf{Proxy metrics.}\quad Table~\ref{tab:app-proxy} reports the proxy metrics proposed by \citet{xie_crystal_2022}. All metrics are computed on the unrelaxed structures. Validity and coverage are reported over all generated structures, while the distribution metrics are computed over the subset of structures that are both compositionally and structurally valid. Compositional validity here is evaluated over all structures, including alloys and elemental structures, with \texttt{include\_alloys=True}, and therefore differs from the results reported in Table~\ref{tab:de-novo}. OxiGen is comparable to MatterGen on structural validity and coverage, and slightly worse on element-count distribution distance.

\noindent\textbf{LeMat-GenBench.}\quad Table~\ref{tab:leaderboard} reports results under the standardised LeMat-GenBench evaluation procedure \citep{betala_lematgenbench_2026}. We evaluate 2,500 structures generated by the OxiGen seed with the highest S.U.N. rate under our primary evaluation, pre-relaxed using MatterSim. We compare against methods trained on MP-20 on the public LeMat-GenBench leaderboard, with results taken directly from the leaderboard; OxiGen is instead trained on MP-20-OS, a filtered subset of MP-20 (Appendix~\ref{app:mp20-labelling}). The baselines evaluated on pre-relaxed structures are WyFormer and WyFormer-DFT \citep{kazeev_wyckoff_2025}, PLaID++ \citep{xu_plaid_2026}, OMatG-FC \citep{martirossyan_framework_2026}, MatterGen \citep{zeni_generative_2025}, MiAD \citep{okhotin_miad_2025}, OMatG \citep{hollmer_open_2025}, MCFlow \citep{seong_multimodal_2026}, and Crystalite \citep{veljkovic_crystalite_2026}. The remaining baselines were evaluated on unrelaxed structures: Crystal-GFN \citep{ai4science_crystal-gfn_2023}, LLaMat2 and LLaMat3 \citep{ahlawat_family_2026}, ADiT \citep{joshi_all-atom_2025}, SymmCD \citep{levy_symmcd_2024}, CrystalFormer \citep{cao_space_2025}, CrystaLLM-$\pi$ \citep{bone_discovery_2025}, DiffCSP++ \citep{jiao_space_2024}, DiffCSP \citep{jiao_crystal_2023}, Chemeleon1 \citep{park_exploration_2025}, and Chemeleon2 \citep{park_guiding_2026}. LeMat-GenBench reports the fractions of stable ($E_{\mathrm{hull}}\leq0$) and metastable ($E_{\mathrm{hull}}\leq0.1$~eV/atom) structures; S.U.N. denotes the fraction of generated structures that are stable, unique, and novel, whereas M.S.U.N. uses metastability in place of stability. These definitions differ from our primary evaluation, which uses $E_{\mathrm{hull}}\leq0.1$~eV/atom as the stability threshold. Under the LeMat-GenBench procedure, OxiGen achieves the highest validity, uniqueness, and M.S.U.N. rate among all evaluated methods, and is competitive across the other metrics.

\begin{table}[t]
    \centering
    \small
    \setlength{\tabcolsep}{4.2pt}
    \renewcommand{\arraystretch}{1.10}
    \caption{Additional \textit{de novo} generation results, reported as mean $\pm$ std over five independently seeded training runs, with 10,240 generated structures per seed. Best values are shown in \textbf{bold} and second-best values are \underline{underlined}.}
    \vspace{0.5\baselineskip}
    \label{tab:app-de-novo}

    \begin{tabular}{@{}lcccccc@{}}
        \toprule
        \multirow{2}{*}{Method}
        & Stable
        & Unique
        & Novel
        & S.U.N.
        & $E_{\mathrm{hull}}$
        & RMSD \\

        & (\%) $\uparrow$
        & (\%) $\uparrow$
        & (\%) $\uparrow$
        & (\%) $\uparrow$
        & (eV/atom) $\downarrow$
        & (\AA) $\downarrow$ \\
        \midrule

        DiffCSP
            & $43.34\msmall{\pm 1.29}$
            & $95.80\msmall{\pm 0.58}$
            & $71.58\msmall{\pm 1.57}$
            & $18.24\msmall{\pm 1.38}$
            & $0.2007\msmall{\pm 0.0101}$
            & $0.3555\msmall{\pm 0.0427}$ \\

        MatterGen
            & $46.11\msmall{\pm 3.17}$
            & $\underline{97.74}\msmall{\pm 0.49}$
            & $\mathbf{75.32}\msmall{\pm 2.72}$
            & $\underline{24.54}\msmall{\pm 0.91}$
            & $0.1799\msmall{\pm 0.0085}$
            & $\underline{0.1747}\msmall{\pm 0.0203}$ \\

        FlowMM
            & $36.04\msmall{\pm 0.83}$
            & $\mathbf{97.99}\msmall{\pm 0.21}$
            & $\underline{74.07}\msmall{\pm 1.01}$
            & $13.85\msmall{\pm 0.39}$
            & $0.2481\msmall{\pm 0.0104}$
            & $0.6704\msmall{\pm 0.0347}$ \\

        OMatG
            & $49.12\msmall{\pm 0.39}$
            & $95.12\msmall{\pm 0.33}$
            & $65.42\msmall{\pm 0.87}$
            & $19.03\msmall{\pm 0.78}$
            & $0.1831\msmall{\pm 0.0030}$
            & $0.5244\msmall{\pm 0.0278}$ \\

        Crystalite
            & $\mathbf{67.17}\msmall{\pm 2.18}$
            & $93.03\msmall{\pm 0.91}$
            & $52.25\msmall{\pm 2.61}$
            & $22.83\msmall{\pm 0.76}$
            & $\mathbf{0.1437}\msmall{\pm 0.0083}$
            & $0.2496\msmall{\pm 0.0198}$ \\

        \midrule

        \textbf{OxiGen}
            & $\underline{52.59}\msmall{\pm 2.25}$
            & $96.45\msmall{\pm 0.64}$
            & $70.68\msmall{\pm 2.31}$
            & $\mathbf{26.38}\msmall{\pm 1.01}$
            & $\underline{0.1575}\msmall{\pm 0.0080}$
            & $\mathbf{0.1679}\msmall{\pm 0.0117}$ \\

        \bottomrule
    \end{tabular}
\end{table}

\begin{table}[t]
    \centering
    \small
    \setlength{\tabcolsep}{5.0pt}
    \renewcommand{\arraystretch}{1.10}
    \caption{Proxy metrics for \textit{de novo} generation, reported as mean $\pm$ std over five independently seeded training runs, with 10,240 generated structures per seed. Best values are shown in \textbf{bold} and second-best values are \underline{underlined}.}
    \vspace{0.5\baselineskip}
    \label{tab:app-proxy}

    \begin{tabular}{@{}lcccccc@{}}
        \toprule
        \multirow{2}{*}{Method}
        & \multicolumn{2}{c}{Validity}
        & \multicolumn{2}{c}{Coverage}
        & \multicolumn{2}{c}{Distribution} \\
        \cmidrule(lr){2-3}
        \cmidrule(lr){4-5}
        \cmidrule(lr){6-7}

        & Comp. (\%) $\uparrow$
        & Struct. (\%) $\uparrow$
        & Cov-R (\%) $\uparrow$
        & Cov-P (\%) $\uparrow$
        & wdist $\rho$ $\downarrow$
        & wdist $N_{\mathrm{el}}$ $\downarrow$ \\
        \midrule

        DiffCSP
            & $87.29\msmall{\pm 1.97}$
            & $99.89\msmall{\pm 0.05}$
            & $99.58\msmall{\pm 0.05}$
            & $99.70\msmall{\pm 0.10}$
            & $0.2514\msmall{\pm 0.1340}$
            & $0.5052\msmall{\pm 0.0557}$ \\

        MatterGen
            & $88.71\msmall{\pm 0.64}$
            & $\underline{99.97}\msmall{\pm 0.04}$
            & $\mathbf{99.71}\msmall{\pm 0.08}$
            & $99.69\msmall{\pm 0.07}$
            & $\mathbf{0.1416}\msmall{\pm 0.0576}$
            & $0.2398\msmall{\pm 0.0791}$ \\

        FlowMM
            & $87.58\msmall{\pm 0.17}$
            & $91.80\msmall{\pm 0.86}$
            & $99.27\msmall{\pm 0.11}$
            & $98.83\msmall{\pm 0.18}$
            & $1.3027\msmall{\pm 0.4222}$
            & $0.1200\msmall{\pm 0.0125}$ \\

        OMatG
            & $89.06\msmall{\pm 0.37}$
            & $99.11\msmall{\pm 0.19}$
            & $99.59\msmall{\pm 0.09}$
            & $\underline{99.83}\msmall{\pm 0.03}$
            & $0.2672\msmall{\pm 0.0152}$
            & $\mathbf{0.0549}\msmall{\pm 0.0097}$ \\

        Crystalite
            & $\underline{89.12}\msmall{\pm 0.41}$
            & $99.68\msmall{\pm 0.06}$
            & $\underline{99.67}\msmall{\pm 0.09}$
            & $\mathbf{99.87}\msmall{\pm 0.04}$
            & $\underline{0.1541}\msmall{\pm 0.0266}$
            & $\underline{0.1151}\msmall{\pm 0.0190}$ \\

        \midrule

        \textbf{OxiGen}
            & $\mathbf{95.24}\msmall{\pm 0.44}$
            & $\mathbf{99.98}\msmall{\pm 0.01}$
            & $99.60\msmall{\pm 0.08}$
            & $99.80\msmall{\pm 0.01}$
            & $0.2045\msmall{\pm 0.0869}$
            & $0.3137\msmall{\pm 0.0340}$ \\

        \bottomrule
    \end{tabular}
\end{table}

\subsection{Ablation Study}

In this section we provide additional results for the ablations in Section~\ref{sec:de-novo}.

\noindent\textbf{Stability, uniqueness, and novelty.}\quad Table~\ref{tab:app-ablation-stable} reports the stability, uniqueness, novelty, and relaxation metrics. Relative to the element-representation baseline, OxiGen slightly increases both the stability and S.U.N. rate. Comparing the species models, structured sampling increases stability and slightly increases the S.U.N. rate, while modestly reducing uniqueness and novelty. Structured training provides no further improvement.

\noindent\textbf{Oxidation-state agreement.}\quad Table~\ref{tab:app-ablation-os} reports the oxidation-state agreement between OxiGen and CrystaliteOS. We label each unrelaxed generated structure using CrystaliteOS, excluding metallic alloys and elemental structures. Site agreement is the fraction of atomic sites for which the generated oxidation state matches the CrystaliteOS prediction and crystal agreement is the fraction of generated crystals for which the two assignments agree at every site. These metrics are not defined for the element representation because it does not explicitly generate oxidation states. Structured sampling substantially improves the agreement between the generated oxidation states and the CrystaliteOS assignments. Structured training again provides no additional improvement. These results suggest that structured sampling not only guarantees charge neutrality, but improves the proportion of generated oxidation states that agree with those inferred from the generated structure.

\begin{table}[h]
    \centering
    \small
    \renewcommand{\arraystretch}{1.10}
    \caption{
    \textit{De novo} generation results under the LeMat-GenBench evaluation procedure.
    OxiGen results are computed over 2,500 generated structures; results for all other models trained only on MP-20 are taken from the public LeMat-GenBench leaderboard.
    Rows are grouped according to whether the submitted structures are pre-relaxed.
    Best and second-best values within the pre-relaxed group are shown in \textbf{bold} and \underline{underlined}, respectively.
    }
    \vspace{0.5\baselineskip}
    \label{tab:leaderboard}

    \begin{tabular*}{\textwidth}{@{\extracolsep{\fill}}lcccccccc@{}}
        \toprule
        \multirow{2}{*}{Model}
        & Valid & Unique & Novel & Stable & Metastable & S.U.N. & M.S.U.N. & $E_{\mathrm{hull}}$ \\
        & (\%) $\uparrow$ & (\%) $\uparrow$ & (\%) $\uparrow$ & (\%) $\uparrow$
        & (\%) $\uparrow$ & (\%) $\uparrow$ & (\%) $\uparrow$ & (eV/atom) $\downarrow$ \\

        \midrule
        \multicolumn{9}{c}{\textbf{Pre-Relaxed Structures}} \\
        \midrule

        WyFormer        & 93.4 & 93.0 & 66.4 & 0.5 & 15.7 & 0.1 & 1.9 & 0.4988 \\
        PLaID++         & 96.0 & 77.8 & 24.2 & 12.4 & \underline{60.7} & 1.0 & 7.6 & 0.0854 \\
        WyFormer-DFT    & 95.2 & 95.0 & 66.4 & 3.7 & 24.8 & 0.4 & 7.8 & 0.2708 \\
        OMatG-FC        & \underline{97.2} & 92.8 & 28.9 & \textbf{18.4} & 57.9 & \textbf{1.7} & 12.0 & \textbf{0.0694} \\
        MatterGen       & 95.7 & 95.1 & \textbf{70.5} & 2.0 & 33.4 & 0.2 & 15.0 & 0.1834 \\
        MiAD            & 96.2 & 94.3 & 40.2 & 6.4 & \textbf{62.0} & 1.0 & 16.6 & \underline{0.0804} \\
        OMatG           & 96.4 & 95.2 & 51.2 & 11.6 & 49.8 & 1.0 & 18.0 & 0.0956 \\
        MCFlow          & \underline{97.2} & \underline{96.3} & 52.2 & 11.9 & 49.3 & 0.7 & 18.9 & 0.0987 \\
        Crystalite      & \underline{97.2} & 95.8 & 53.2 & \underline{12.7} & 51.6 & \underline{1.5} & \underline{22.6} & 0.0905 \\
        \textbf{OxiGen} & \textbf{99.6} & \textbf{98.5} & \underline{68.8} & 5.0 & 49.5 & 0.6 & \textbf{26.7} & 0.1203 \\

        \midrule
        \multicolumn{9}{c}{\textbf{Non-Pre-Relaxed Structures}} \\
        \midrule

        Crystal-GFN     & 51.7 & 51.7 & 51.7 & 0.0 & 0.0 & 0.0 & 0.0 & 2.0858 \\
        LLaMat3         & 15.4 & 15.2 & 10.5 & 0.1 & 2.1 & 0.0 & 0.2 & 1.7067 \\
        ADiT            & 90.6 & 87.8 & 26.0 & 0.4 & 36.5 & 0.0 & 1.0 & 0.3333 \\
        LLaMat2         & 84.4 & 81.4 & 30.0 & 0.7 & 34.7 & 0.1 & 2.1 & 0.4395 \\
        SymmCD          & 73.4 & 73.0 & 47.0 & 1.4 & 18.6 & 0.1 & 2.4 & 0.8761 \\
        CrystalFormer   & 69.9 & 69.4 & 31.8 & 1.4 & 28.8 & 0.0 & 3.1 & 0.7039 \\
        CrystaLLM-$\pi$ & 86.8 & 84.9 & 24.9 & 2.9 & 49.0 & 0.3 & 3.6 & 0.3205 \\
        DiffCSP++       & 95.3 & 95.1 & 62.0 & 1.0 & 26.4 & 0.2 & 5.0 & 0.4093 \\
        DiffCSP         & 95.7 & 94.8 & 66.2 & 2.3 & 29.8 & 0.1 & 8.5 & 0.2747 \\
        Chemeleon1      & 95.4 & 94.8 & 64.0 & 3.6 & 34.4 & 0.3 & 11.3 & 0.2055 \\
        Chemeleon2      & 95.2 & 88.1 & 71.6 & 0.0 & 39.8 & 0.0 & 21.2 & 0.1557 \\

        \bottomrule
    \end{tabular*}
\end{table}

\begin{table}[h]
\centering
\small
\setlength{\tabcolsep}{4.5pt}
\renewcommand{\arraystretch}{1.08}
\caption{
Ablation results for stability, uniqueness, novelty, and relaxation quality. Results are reported as mean $\pm$ std over five independently seeded training runs, with 10,240 generated structures per seed.
The method names indicate whether structured training / structured sampling are used, respectively.
Best values are shown in \textbf{bold} and second-best values are \underline{underlined}; the shaded row is OxiGen.
}
\vspace{0.5\baselineskip}
\label{tab:app-ablation-stable}

\begin{tabular}{@{}lcccccc@{}}
    \toprule
    Method
    & Stable
    & Unique
    & Novel
    & S.U.N.
    & $E_{\mathrm{hull}}$
    & RMSD \\

    &
    (\%) $\uparrow$
    & (\%) $\uparrow$
    & (\%) $\uparrow$
    & (\%) $\uparrow$
    & (eV/atom) $\downarrow$
    & (\AA) $\downarrow$ \\

    \midrule

    Element \xmark/\xmark
        & $47.47\msmall{\pm 2.65}$
        & $\mathbf{97.48}\msmall{\pm 0.36}$
        & $\mathbf{74.68}\msmall{\pm 2.03}$
        & $25.21\msmall{\pm 0.60}$
        & $0.1709\msmall{\pm 0.0103}$
        & $0.1728\msmall{\pm 0.0098}$ \\

    Species \xmark/\xmark
        & $49.72\msmall{\pm 2.53}$
        & $\underline{97.28}\msmall{\pm 0.35}$
        & $\underline{73.20}\msmall{\pm 1.80}$
        & $\underline{26.08}\msmall{\pm 0.92}$
        & $\underline{0.1646}\msmall{\pm 0.0035}$
        & $\mathbf{0.1616}\msmall{\pm 0.0102}$ \\

    \rowcolor{gray!15}
    Species \xmark/\cmark
        & $\mathbf{52.59}\msmall{\pm 2.25}$
        & $96.45\msmall{\pm 0.64}$
        & $70.68\msmall{\pm 2.31}$
        & $\mathbf{26.38}\msmall{\pm 1.01}$
        & $\mathbf{0.1575}\msmall{\pm 0.0080}$
        & $\underline{0.1679}\msmall{\pm 0.0117}$ \\

    Species \cmark/\cmark
        & $\underline{50.35}\msmall{\pm 2.39}$
        & $96.80\msmall{\pm 0.70}$
        & $72.04\msmall{\pm 2.03}$
        & $25.62\msmall{\pm 0.69}$
        & $0.1696\msmall{\pm 0.0075}$
        & $0.1853\msmall{\pm 0.0175}$ \\

    \bottomrule
\end{tabular}
\end{table}

\begin{table}[h]
\centering
\small
\setlength{\tabcolsep}{4.5pt}
\renewcommand{\arraystretch}{1.08}
\caption{
Ablation results for charge neutrality and oxidation-state agreement.
Results are reported as mean $\pm$ std over five independently seeded training runs, with 10,240 generated structures per seed. The  method names indicate whether structured training / structured sampling are used, respectively.
Best values are shown in \textbf{bold} and second-best values are \underline{underlined}; the shaded row is OxiGen.
}
\vspace{0.5\baselineskip}
\label{tab:app-ablation-os}

\begin{tabular}{@{}lccc@{}}
    \toprule
    Method
    & Charge Neutrality (\%) $\uparrow$
    & Site Agreement (\%) $\uparrow$
    & Crystal Agreement (\%) $\uparrow$ \\
    \midrule

    Element \xmark/\xmark
        & --
        & --
        & -- \\

    Species \xmark/\xmark
        & $\underline{39.08}\msmall{\pm 2.60}$
        & $74.45\msmall{\pm 1.84}$
        & $33.57\msmall{\pm 2.29}$ \\

    \rowcolor{gray!15}
    Species \xmark/\cmark
        & $\mathbf{100.00}\msmall{\pm 0.00}$
        & $\mathbf{90.69}\msmall{\pm 0.81}$
        & $\mathbf{72.47}\msmall{\pm 1.40}$ \\

    Species \cmark/\cmark
        & $\mathbf{100.00}\msmall{\pm 0.00}$
        & $\underline{90.45}\msmall{\pm 0.39}$
        & $\underline{71.49}\msmall{\pm 1.32}$ \\

    \bottomrule
\end{tabular}
\end{table}

\subsection{Band-Gap Conditional Generation}

Table~\ref{tab:app-bandgap-conditional} reports additional results for the band-gap-conditioned generation experiments in Section~\ref{sec:de-novo}. The band-gap RMSE measures the root-mean-square error between the conditioning target and the iComFormer-predicted band gap of the relaxed generated structures. OxiGen achieves a higher S.U.N. rate than MatterGen at all four conditioning targets, due to a substantially higher stability rate despite a lower novelty rate. Band-gap RMSE is comparable at 1~eV and better for OxiGen at the 3, 5, and 7~eV targets.

\begin{table}[H]
    \centering
    \small
    \setlength{\tabcolsep}{4.2pt}
    \renewcommand{\arraystretch}{1.10}
    \vspace{-1\baselineskip}
    \caption{
    Band-gap-conditioned generation results for target band gaps of 1, 3, 5, and 7~eV.
    Results are shown for a single fine-tuning seed, with 1,024 generated structures at each target.
    }
    \vspace{0.5\baselineskip}
    \label{tab:app-bandgap-conditional}

    \begin{tabular*}{\textwidth}{@{\extracolsep{\fill}}lcccccccc@{}}
        \toprule
        \multirow{2}{*}{Method}
        & Target
        & RMSE
        & Stable
        & Unique
        & Novel
        & S.U.N.
        & $E_{\mathrm{hull}}$
        & RMSD \\

        & (eV)
        & (eV) $\downarrow$
        & (\%) $\uparrow$
        & (\%) $\uparrow$
        & (\%) $\uparrow$
        & (\%) $\uparrow$
        & (eV/atom) $\downarrow$
        & (\AA) $\downarrow$ \\
        \midrule

        \multirow{4}{*}{MatterGen}
            & 1
            & 1.116
            & 49.32
            & 99.02
            & 82.42
            & 33.59
            & 0.1847
            & 0.1961 \\

            & 3
            & 1.724
            & 45.90
            & 98.93
            & 83.98
            & 31.05
            & 0.1530
            & 0.2684 \\

            & 5
            & 2.185
            & 49.51
            & 94.82
            & 82.32
            & 31.54
            & 0.1513
            & 0.3016 \\

            & 7
            & 2.622
            & 55.37
            & 86.82
            & 84.18
            & 35.16
            & 0.1456
            & 0.3338 \\

        \midrule

        \multirow{4}{*}{\textbf{OxiGen}}
            & 1
            & 1.138
            & 60.55
            & 96.48
            & 73.63
            & 35.84
            & 0.1489
            & 0.1813 \\

            & 3
            & 1.322
            & 71.48
            & 96.29
            & 71.97
            & 43.55
            & 0.0877
            & 0.1895 \\

            & 5
            & 1.486
            & 72.75
            & 91.11
            & 70.41
            & 42.97
            & 0.0816
            & 0.2303 \\

            & 7
            & 2.125
            & 68.95
            & 87.79
            & 75.98
            & 41.89
            & 0.0878
            & 0.2580 \\

        \bottomrule
    \end{tabular*}
\end{table}

\newpage

\subsection{Example Crystals}

\begin{figure}[h]
    \centering
    \setlength{\tabcolsep}{2pt}

    \begin{tabular}{@{}ccc@{}}

        \begin{tabular}{@{}c@{}}
            \includegraphics[
                width=0.31\textwidth
            ]{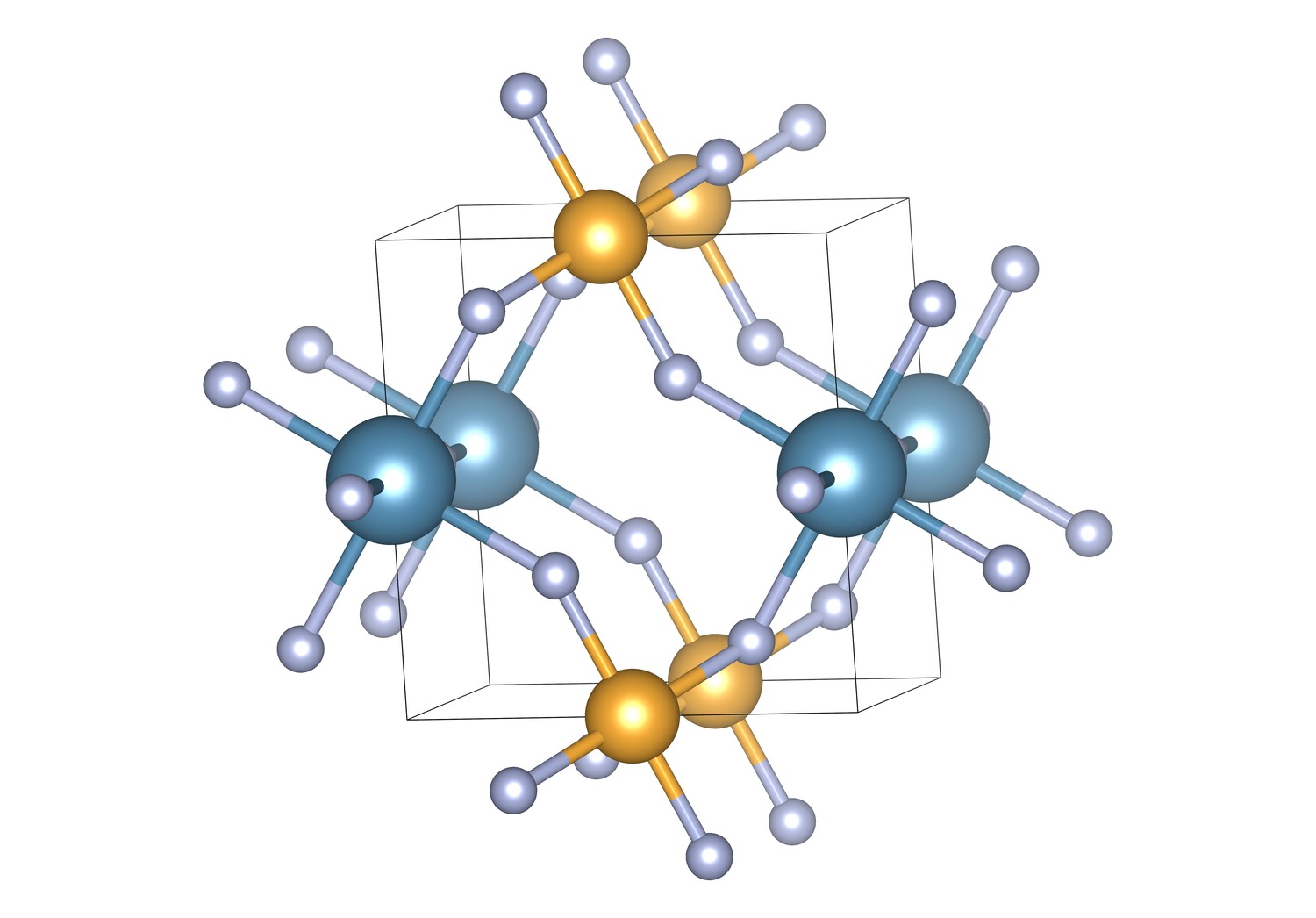} \\
            \small \makebox[0pt][r]{\textbf{(a)}\enspace}CaAuF$_5$ \\[-1pt]
            \scriptsize Ca$^{2+}$ Au$^{3+}$ F$^{-}$
        \end{tabular}
        &
        \begin{tabular}{@{}c@{}}
            \includegraphics[
                width=0.31\textwidth
            ]{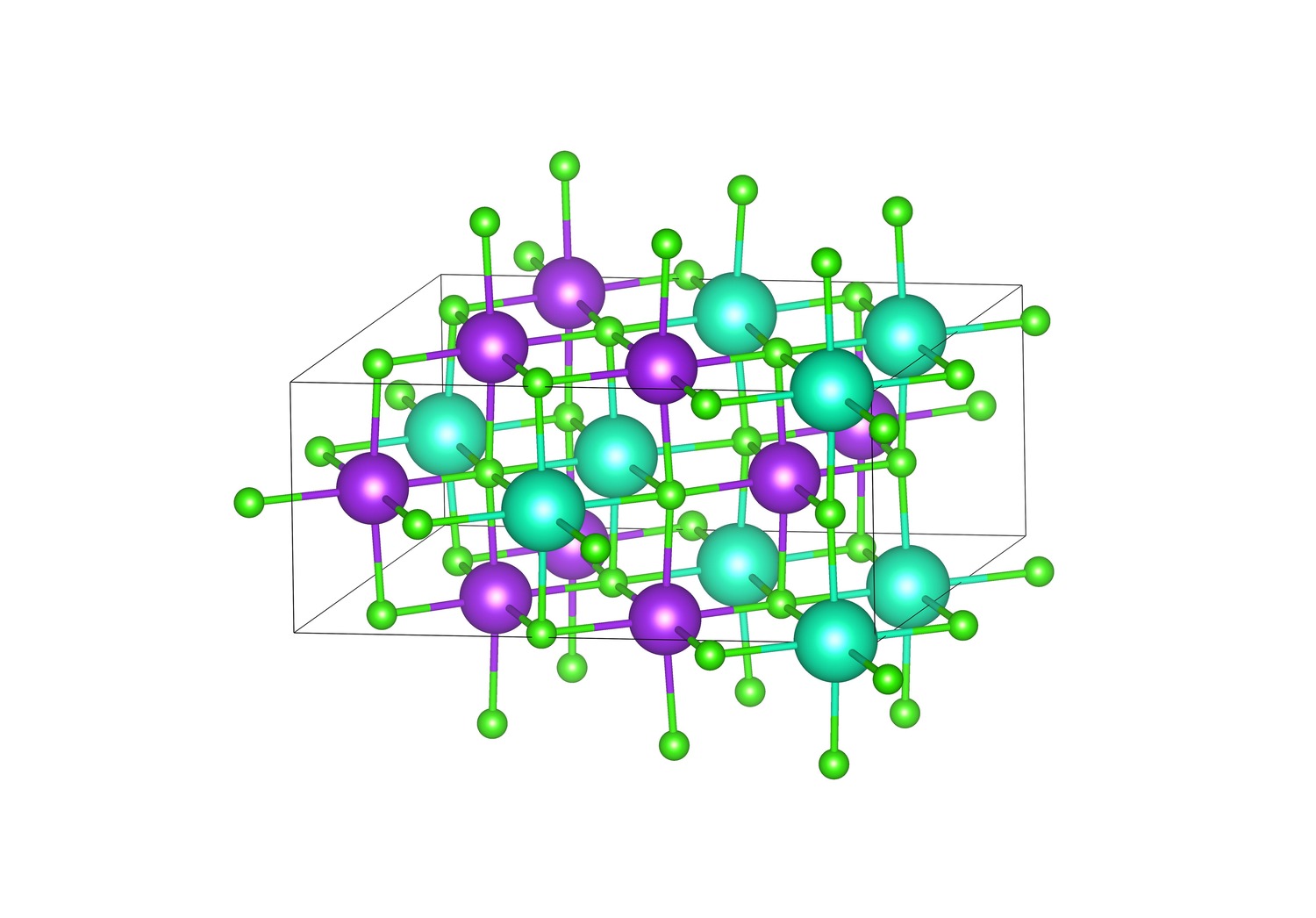} \\
            \small \makebox[0pt][r]{\textbf{(b)}\enspace}CsKCl$_2$ \\[-1pt]
            \scriptsize Cs$^{+}$ K$^{+}$ Cl$^{-}$
        \end{tabular}
        &
        \begin{tabular}{@{}c@{}}
            \includegraphics[
                width=0.31\textwidth
            ]{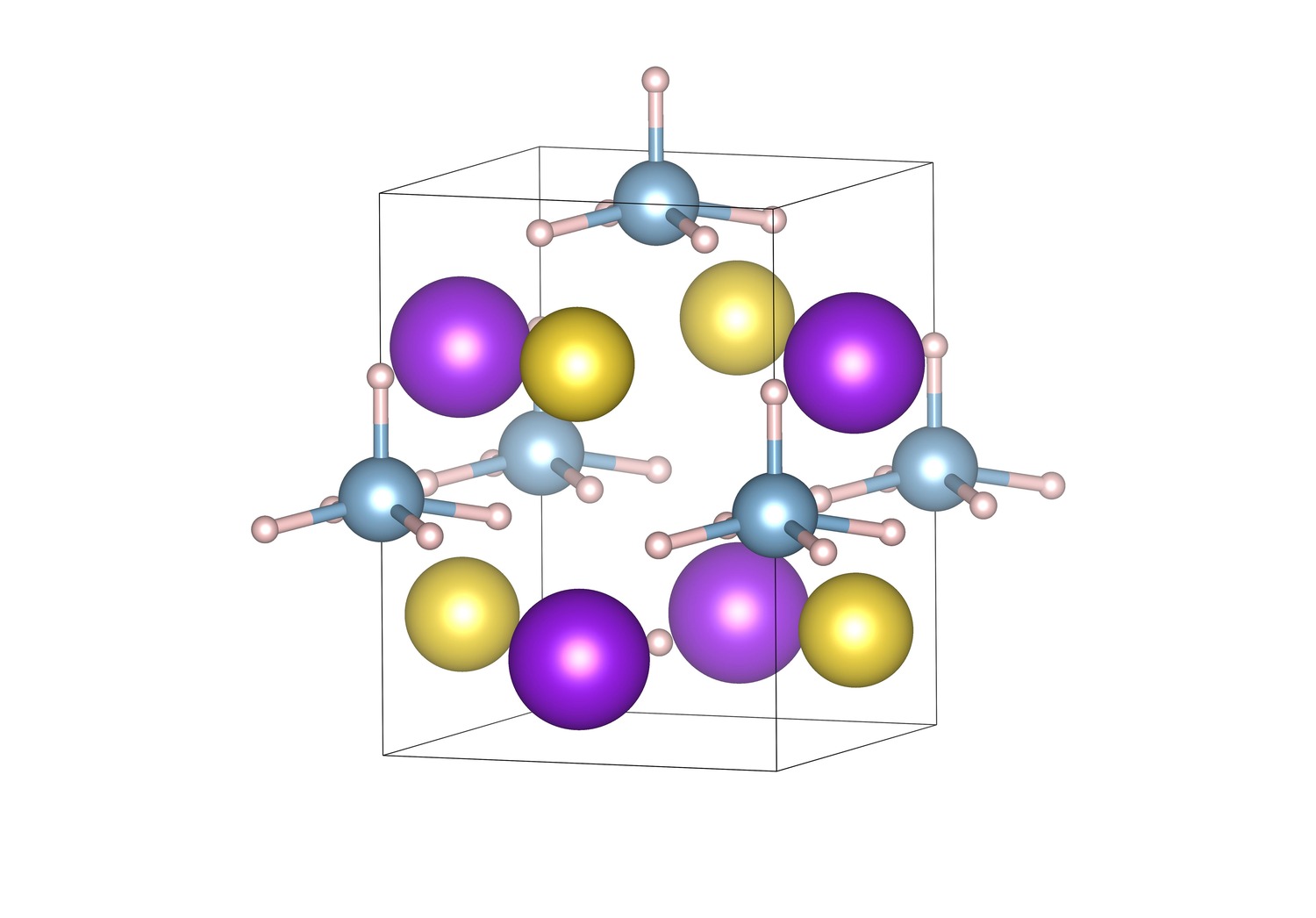} \\
            \small \makebox[0pt][r]{\textbf{(c)}\enspace}KNaAlH$_5$ \\[-1pt]
            \scriptsize K$^{+}$ Na$^{+}$ Al$^{3+}$ H$^{-}$
        \end{tabular}

        \\[2mm]

        \begin{tabular}{@{}c@{}}
            \includegraphics[
                width=0.31\textwidth
            ]{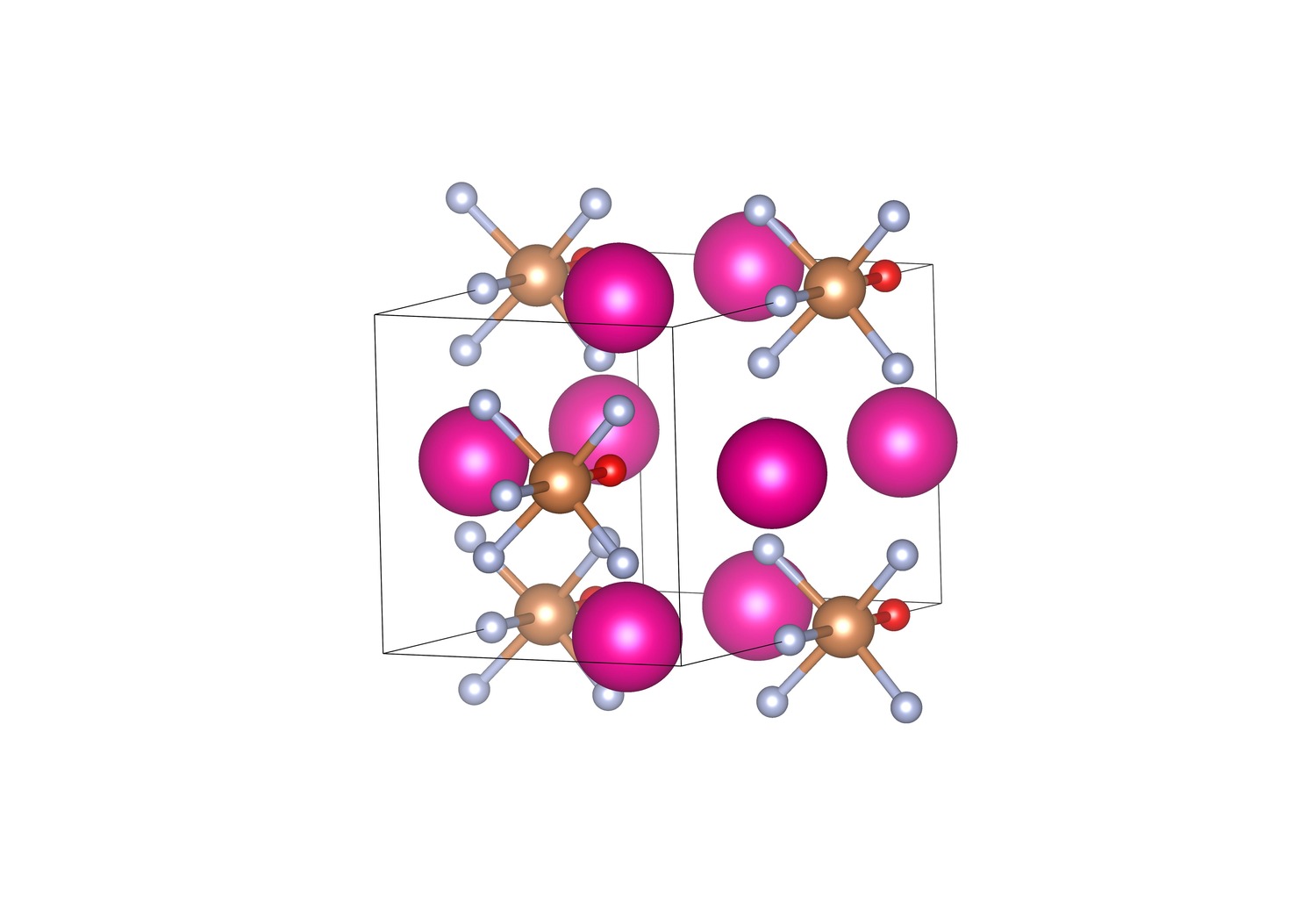} \\
            \small \makebox[0pt][r]{\textbf{(d)}\enspace}Rb$_2$SbOF$_5$ \\[-1pt]
            \scriptsize Rb$^{+}$ Sb$^{5+}$ O$^{2-}$ F$^{-}$
        \end{tabular}
        &
        \begin{tabular}{@{}c@{}}
            \includegraphics[
                width=0.31\textwidth
            ]{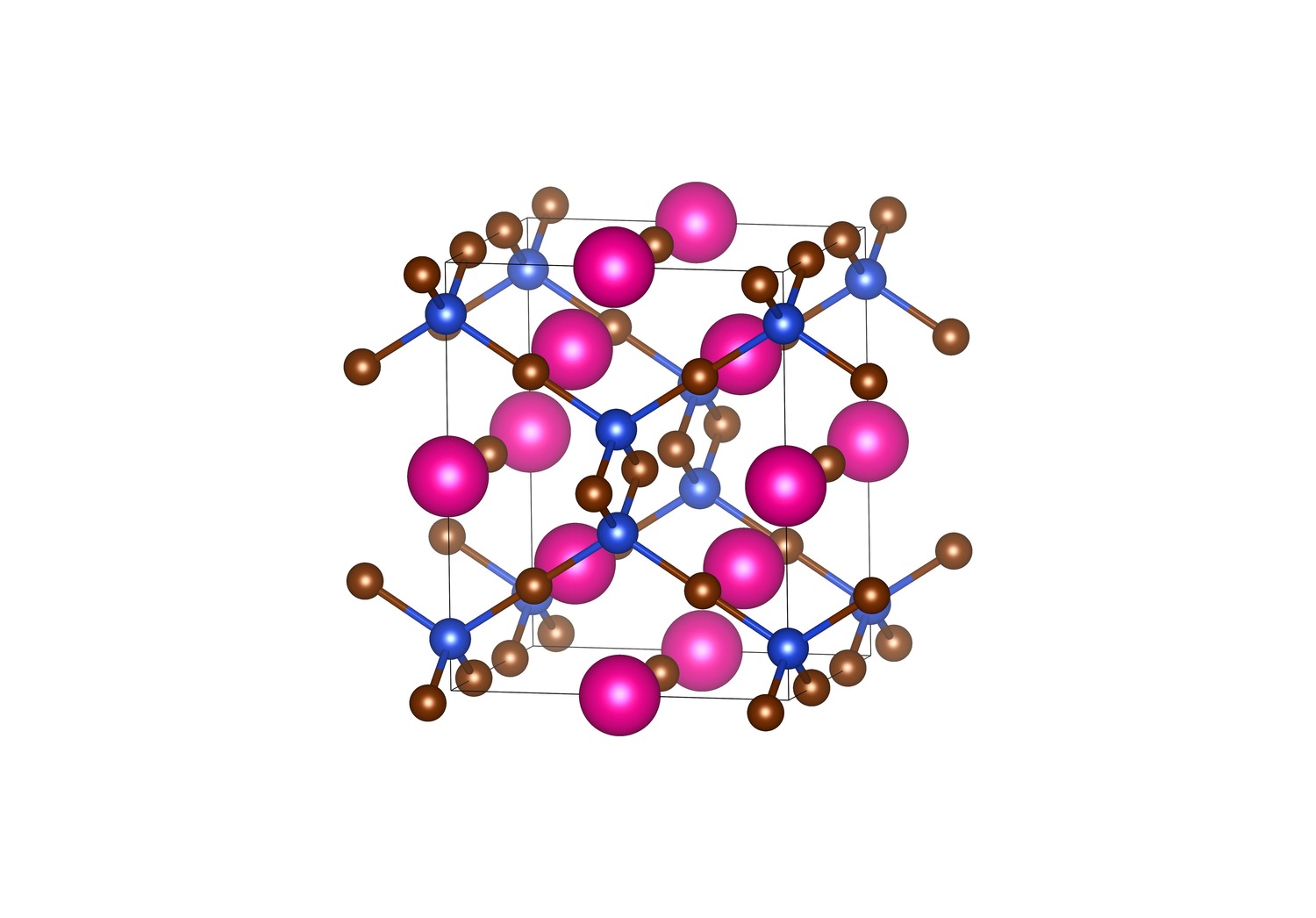} \\
            \small \makebox[0pt][r]{\textbf{(e)}\enspace}Rb$_3$Cu$_2$Br$_5$ \\[-1pt]
            \scriptsize Rb$^{+}$ Cu$^{+}$ Br$^{-}$
        \end{tabular}
        &
        \begin{tabular}{@{}c@{}}
            \includegraphics[
                width=0.31\textwidth
            ]{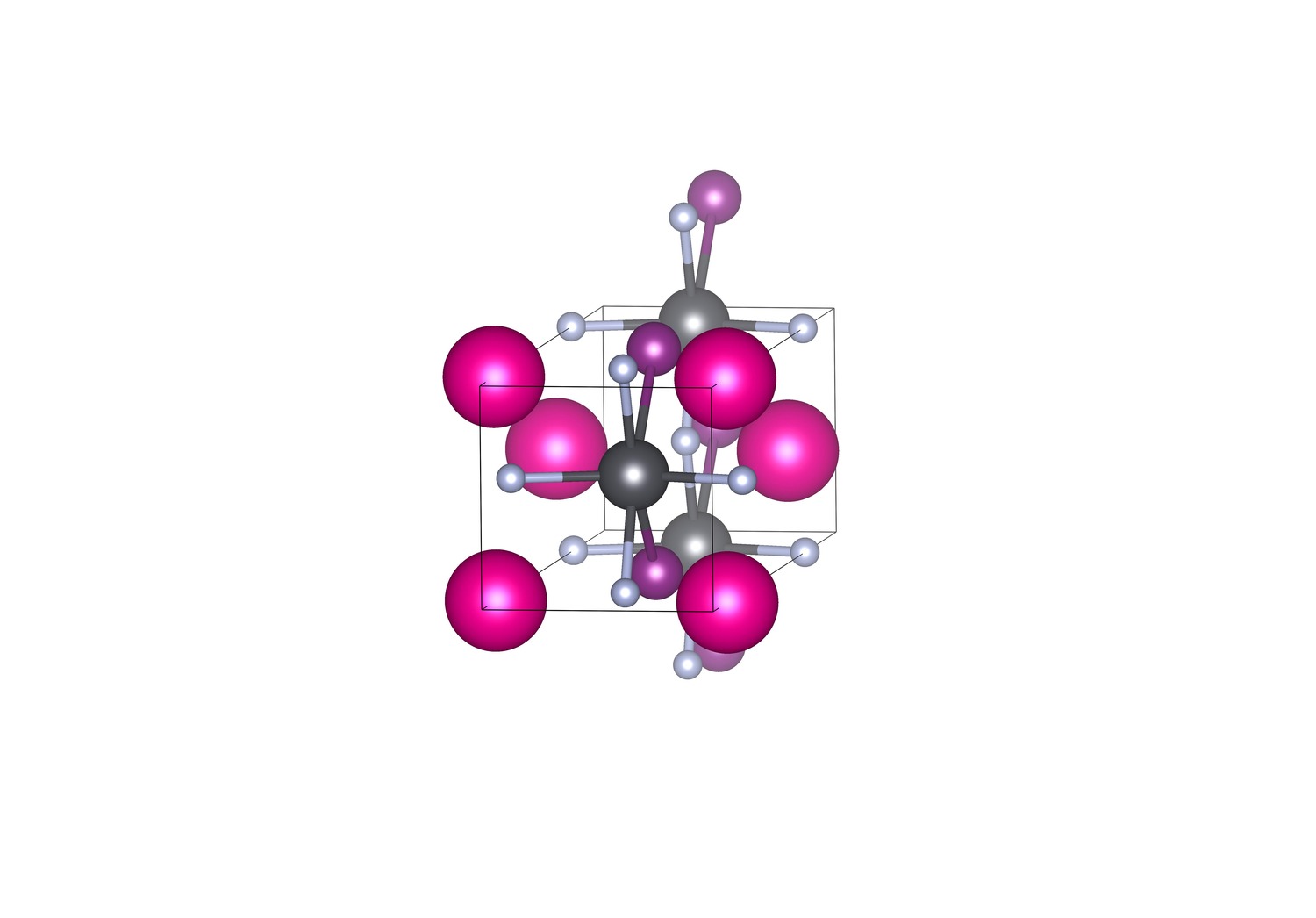} \\
            \small \makebox[0pt][r]{\textbf{(f)}\enspace}RbPbIF$_2$ \\[-1pt]
            \scriptsize Rb$^{+}$ Pb$^{2+}$ I$^{-}$ F$^{-}$
        \end{tabular}

        \\[2mm]

        \begin{tabular}{@{}c@{}}
            \includegraphics[
                width=0.31\textwidth
            ]{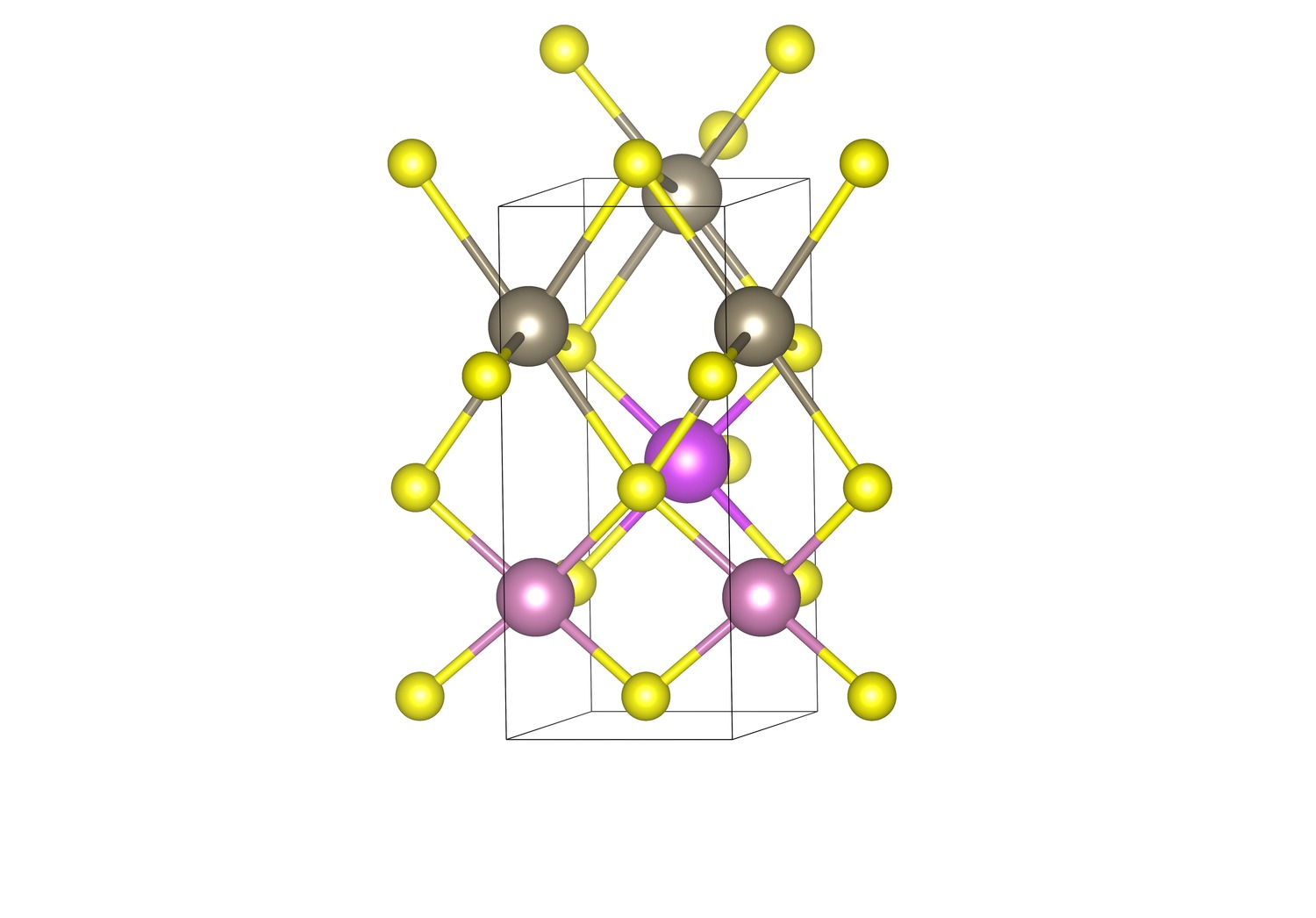} \\
            \small \makebox[0pt][r]{\textbf{(g)}\enspace}Tl$_2$InBiS$_4$ \\[-1pt]
            \scriptsize Tl$^{+}$ In$^{3+}$ Bi$^{3+}$ S$^{2-}$
        \end{tabular}
        &
        \begin{tabular}{@{}c@{}}
            \includegraphics[
                width=0.31\textwidth
            ]{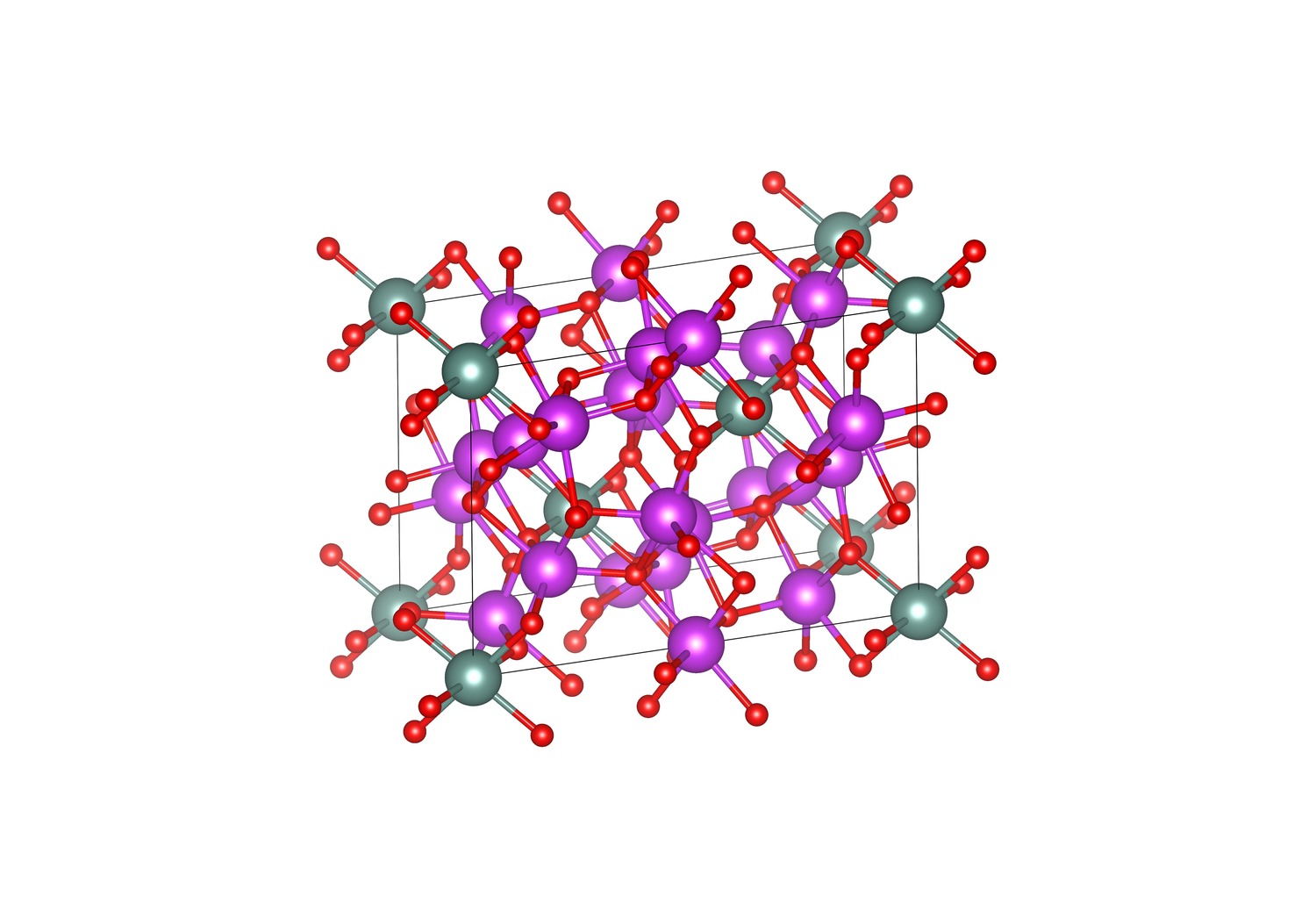} \\
            \small \makebox[0pt][r]{\textbf{(h)}\enspace}YBi$_7$O$_{12}$ \\[-1pt]
            \scriptsize Y$^{3+}$ Bi$^{3+}$ O$^{2-}$
        \end{tabular}
        &
        \begin{tabular}{@{}c@{}}
            \includegraphics[
                width=0.31\textwidth
            ]{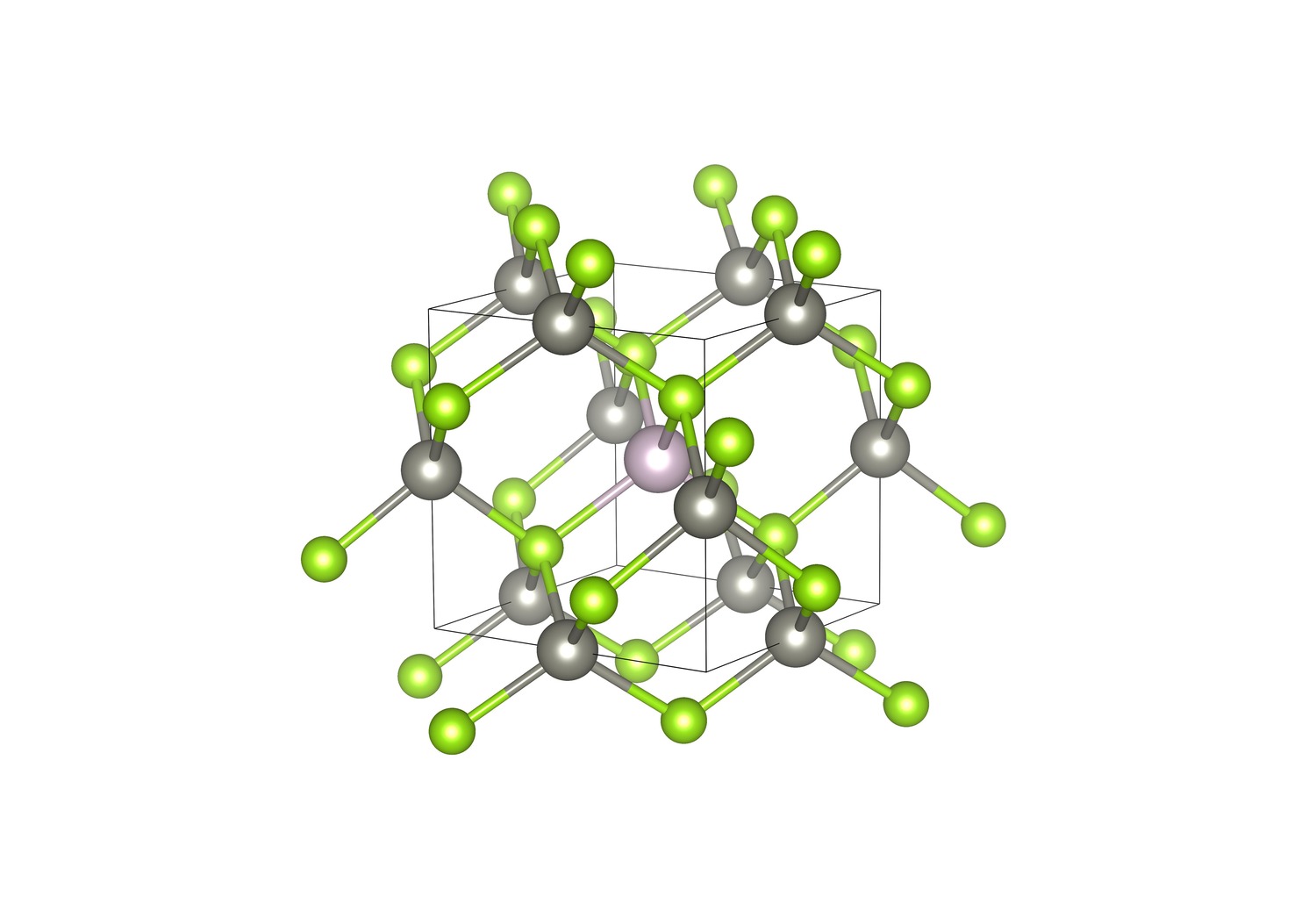} \\
            \small \makebox[0pt][r]{\textbf{(i)}\enspace}Zn$_3$HgSe$_4$ \\[-1pt]
            \scriptsize Zn$^{2+}$ Hg$^{2+}$ Se$^{2-}$
        \end{tabular}

    \end{tabular}

    \caption{Examples of stable, unique, and novel crystals generated by OxiGen.}
    \label{fig:crystal-grid}
\end{figure}

\end{document}